\def\eqref#1{equation~\ref{#1}}
\def\1{\bm{1}}
\DeclareMathAlphabet{\mathsfit}{\encodingdefault}{\sfdefault}{m}{sl}
\SetMathAlphabet{\mathsfit}{bold}{\encodingdefault}{\sfdefault}{bx}{n}
\def\gI{{\mathcal{I}}}
\def\gN{{\mathcal{N}}}
\newcommand{\mP}{\mathbf{P}}
\newcommand{\mS}{\mathbf{S}}
\newcommand{\eqdef}{\coloneqq}
\definecolor{mygreen1}{rgb}{0,0.8,0}
\newenvironment{proof}{\par\noindent{\em Proof:\ }}{\hfill\BlackBox\\[.0mm]}
\newtheorem{theorem}{Theorem}
\newtheorem{lemma}{Lemma}
\newtheorem{definition}{Definition}
\newtheorem{assumption}{Assumption}
\definecolor{bgcolor}{rgb}{0.93,0.99,1}
\definecolor{bgcolor2}{rgb}{0.8,1,0.8}
\definecolor{bgcolor3}{rgb}{0.50,0.90,0.50}
\newcommand{\ec}[1]{\mathbb{E}\!\left[ #1 \right]}
\definecolor{mydarkgreen}{rgb}{0,0.42,0}
\definecolor{mydarkred}{rgb}{0.75,0,0}
\definecolor{mygreen2}{RGB}{0,120,20}
\newcommand{\algname}[1]{{\sf\color{mydarkred}\scalefont{0.90}{#1}}\xspace}
\newcommand{\sqn}[1]{{\left\lVert#1\right\rVert}^2}
\newcommand{\norm}[1]{\left\| #1 \right\|}
\newcommand{\Norm}[1]{\left\|#1\right\|}
\newcommand{\sqN}[1]{\Norm{#1}^2}
\newcommand{\mbR}{\mathbb{R}}
\newcommand{\Rd}{\mathbb{R}^d}
\newcommand{\sumin}{\sum_{i=1}^n}
\newcommand{\sumjn}{\sum_{j=1}^n}
\newcommand{\avein}{\frac{1}{n}\sum_{i=1}^n}
\newcommand{\aveid}{\frac{1}{d}\sum_{i=1}^d}
\newcommand{\vI}{{\mathbf{I}}}
\newcommand{\mL}{\mathbf{L}}
\newcommand{\mI}{\mathbf{I}}
\newcommand{\Diag}{\mathrm{Diag}}
\DeclareMathOperator{\moL}{\overline{\mathbf{L}}}
\DeclareMathOperator{\moB}{\overline{\mathbf{B}}}
\DeclareMathOperator{\moW}{\overline{\mathbf{W}}}
\DeclareMathOperator{\ob}{\overline{b}}
\title{FedP3: Federated Personalized and Privacy-friendly  Network Pruning under Model Heterogeneity}
\author{Kai Yi$^{1}$\thanks{Work done during an internship at Sony AI.} , Nidham Gazagnadou$^{2}$, Peter Richt\'{a}rik$^{1}$, Lingjuan Lyu$^{2}$\\
$^{1}$KAUST, $^{2}$Sony AI \\
\texttt{kai.yi@kaust.edu.sa, nidham.gazagnadou@sony.com} \\
\texttt{peter.richtarik@kaust.edu.sa, lingjuan.lv@sony.com}
}  
\begin{document}

\maketitle    
     
\begin{abstract}
The interest in federated learning has surged in recent research due to its unique ability to train a global model using privacy-secured information held locally on each client. 
This paper pays particular attention to the issue of client-side model heterogeneity, a pervasive challenge in the practical implementation of FL that escalates its complexity. 
Assuming a scenario where each client possesses varied memory storage, processing capabilities and network bandwidth - a phenomenon referred to as system heterogeneity - there is a pressing need to customize a unique model for each client.
In response to this, we present an effective and adaptable federated framework \algname{FedP3}, representing \textbf{Fed}erated \textbf{P}ersonalized and \textbf{P}rivacy-friendly network \textbf{P}runing, tailored for model heterogeneity scenarios.   
Our proposed methodology can incorporate and adapt well-established techniques to its specific instances. We offer a theoretical interpretation of \algname{FedP3} and its locally differential-private variant, \algname{DP-FedP3}, and theoretically validate their efficiencies.

\end{abstract}

\section{Introduction}

Federated learning (FL)~\citep{mcmahan2017communication,konevcny2016federated} has emerged as a significant machine learning paradigm wherein multiple clients perform computations on their private data locally and subsequently communicate their findings to a remote server. 
Standard FL can be articulated as an optimization problem, specifically the Empirical Risk Minimization (ERM) given by
\begin{equation}\label{eqn:objective1}
    \min_{W\in \mathbb{R}^d} f(W) \eqdef \frac{1}{n} \sum_{i=1}^n f_i(W) \enspace,
\end{equation}
where \(W\) represents the shared global network parameters, \(f_i\) denotes the local objective for client \(i\), and \(n\) is the total number of clients. 

Distinguishing it from conventional distributed learning, FL predominantly addresses heterogeneity stemming from both data and model aspects. 
Data heterogeneity characterizes the fact that the local data distribution across clients can vary widely. 
Such variation is rooted in real-world scenarios where clients or users exhibit marked differences in their data, reflective of the variety of sensors or software~\cite{jiang2020federated}, of users' unique preferences, etc.~\cite{li2020federated2}. 
Recent works~\cite{zhao2018federated} showed how detrimental the non-iidness of the local data could be on the training of a FL model. 
This phenomenon known as client-drift, is intensively studied to develop methods limiting its impact on the performance~\citep{karimireddy2020scaffold, Gao_2022_CVPR, Mendieta_2022_CVPR}.
  
Furthermore, given disparities among clients in device resources, e.g., energy consumption, computational capacities, memory storage or network bandwidths, model heterogeneity becomes a pivotal consideration. 
To avoid restricting the global model's architecture to the largest that is compatible with all clients, recent methods aim at reducing its size differently for each client to extract the utmost of their capacities.
This can be referred to as constraint-based local model personalization~\citep{gao2022survey}.
In such a context, clients often train a pruned version of the global model~\citep{jiang2022model,HeteroFL} before transmitting it to the server for aggregation~\citep{li2021model}. 
A contemporary and influential offshoot of this is Independent Subnetwork Training (IST)~\citep{yuan2022distributed}. 
It hinges on the concept that each client trains a subset of the main server-side model, subsequently forwarding the pruned model to the server. 
Such an approach significantly trims local computational burdens in FL~\citep{dun2023efficient}. 

Our research, while aligning with the IST premise, brings to light some key distinctions. A significant observation from our study is the potential privacy implications of continuously sending the complete model back to the server. Presently, even pruned networks tend to preserve the overarching structure of the global model. In this paper, we present an innovative approach to privacy-friendly pruning. Our method involves transmitting only select segments of the global model back to the server. This technique effectively conceals the true structure of the global model, thus achieving a delicate balance between utility and confidentiality.
As highlighted in \cite{zeiler2014visualizing}, different layers within networks demonstrate varied capacities for representation and semantic interpretation. The challenge of securely transferring knowledge from client to server, particularly amidst notable model heterogeneity, is an area that has not been thoroughly explored. { It's pertinent to acknowledge that the concept of gradient pruning as a means of preserving privacy was initially popularized by the foundational work of \cite{zhu2019deep}. Following this, studies such as \cite{huang2020privacy} have further investigated the efficacy of DNN pruning in maintaining privacy.}

Besides, large language models (LLMs) have garnered significant attention and have been applied to a plethora of real-world scenarios~\citep{brown2020language, chowdhery2022palm, touvron2023llama} recently. 
However, the parameter count of modern LLMs often reaches the billion scale, making it challenging to utilize user or client information and communicate within a FL framework. 
We aim to explore the feasibility of training a more compact local model and transmitting only a subset of the global network parameters to the server, while still achieving commendable performance.
   
From a formulation standpoint, our goal is to optimize the following objective, thereby crafting a global model under conditions of model heterogeneity:

\begin{equation}
    \min_{W_1, \dotsc, W_n\in \Rd} f(W) \eqdef h\left( f_1(W_1), f_2(W_2), \dotsc, f_n(W_n)\right) \enspace,
\end{equation}
where $W_i$ denotes the model downloaded from client $i$ to the server, which can differ as we allow global pruning or other sparsification strategies. The global model $W$ is a function of $\{W_1, W_2, \dotsc, W_n\}$, $f_i$ the local objective for client $i$ and $n$ the total number of clients. 
    Function $h$ is the aggregation mapping from the clients to the server. 
In conventional FL, it's assumed that function $h$ is the average and all $W_1=\dotsc W_n = W$, which means the full global model is downloaded from the server to every client. When maintaining a global model $W$, this gives us $f(x) \eqdef \frac{1}{n} \sum_{i=1}^{n} f_i(W)$, which aligns with the standard empirical risk minimization (ERM).

In this paper, we introduce an efficient and adaptable federated network pruning framework tailored to address model heterogeneity. 
The main contributions of our framework, denoted as \algname{FedP3} (\textbf{Fed}erated \textbf{P}ersonalized and \textbf{P}rivacy-friendly network \textbf{P}runing) algorithm, are:
    
{
\begin{itemize}
\item \textbf{ Versatile Framework:} 
Our framework allows personalization based on each client's unique constraints (computational, memory, and communication).
\item \textbf{ Dual-Pruning Method:} Incorporates both global (server to client) and local (client-specific) pruning strategies for enhanced efficiency.
\item \textbf{ Privacy-Friendly Approach:} Ensures privacy-friendly to each client by limiting the data shared with the server to only select layers post-local training.
\item \textbf{ Managing Heterogeneity:} Effectively tackles data and model diversity, supporting non-iid data distributions and various client-model architectures.
\item \textbf{Theoretical Interpretation:} Provides a comprehensive analysis of global pruning and personalized model aggregation. Discusses convergence theories, communication costs, and the advantages over existing methodologies.
\item \textbf{Local Differential-Privacy Algorithm:} Introduces \algname{LDP-FedP3}, a novel local differential privacy algorithm. Outlines privacy guarantees, utility, and communication efficiency.
\end{itemize}
}

\section{Approach}

We focus on the training of neural networks within the  FL paradigm. Consider a global model 
\begin{equation*}
    W \coloneqq \{W^0, W^1, \dots, W^L, W^{\mathrm{out}}\} \enspace,
\end{equation*}
where \(W^0\) represents the weights of the input layer, \(W^{\mathrm{out}}\) the weights of the final output layer, and \(L\) the number of hidden layers. 
Each \(W^l\), for all \(l \in \mathcal{L} \coloneqq \{0, 1, \dots, L\}\), denotes the model parameters for layer \(l\). 
We distribute the complete dataset \(X\) across \(n\) clients following a specific distribution, which can be non-iid. 
Each client then conducts local training on its local data denoted by \(X_i\). 

\paragraph{Algorithmic overview.} In Algorithm~\ref{alg:framework}, we introduce the details of our proposed general framework called \textbf{Fed}erated \textbf{P}ersonalized and \textbf{P}rivacy-friendly network \textbf{P}runing (\algname{FedP3}).
For every client \(i \in [n]\), we assign predefined pruning mechanisms \(P_i\) and \(Q_i\), determined by the client's computational capacity and network bandwidth (see Line~\ref{line:pruning_mechanism}). 
Here, \(P_i\) denotes the maximum capacity of a pruned global model \(W\) sent to client \(i\), signifying server-client global pruning. On the other hand, \(Q_i\) stands for the local pruning mechanism, enhancing both the speed of local computation and the robustness (allowing more dynamics) of local network training.

\begin{algorithm*}[!t]
	\caption{\algname{FedP3}}
	\label{alg:framework}
	\begin{algorithmic}[1]
		\STATE {\bf Input:} Client $i$ has data $X_i$ for $i\in [n]$, the number of local updates $K$, the number of communication rounds $T$, initial model weights $W_t = \{W_t^0, W_t^1, \ldots, W_t^L\}$ on the server for $t=0$
            \vspace{-\baselineskip}
            \STATE Server specifies the server pruning mechanism $P_i$, the client pruning mechanism $Q_i$, and the set of layers to train $L_i \subseteq [L]$ for each client $i\in [n]$  \label{line:pruning_mechanism}
		\FOR{$t=0,1,\dotsc,T-1$}  
		      \STATE Server samples a subset of participating clients $\mathcal{C}_t\ \subset [n]$  \label{line:sample_participating_clients}
                \STATE Server sends the layer weights $W_t^l$ for $l \in L_i$ to client $i\in \mathcal{C}_t$ for training \label{line:server_sends_layers}
                \STATE Server sends the pruned weights $P_i \odot W_t^l$ for $l \notin L_i$ to client $i\in \mathcal{C}_t$  \label{line:server_sends_weigths}
                \FOR{each client $i \in \mathcal{C}_t$ in parallel}
                    \STATE Initialize $W_{t,0}^l = W_t^l$ for all $l \in [L_i]$ and $W_{t, 0}^{l} = P_i \odot W_t^{l}$ for all $l \notin [L_i]$ \label{line:local_training_begin}
                    \FOR{$k = 0, 1, \dotsc, K-1$}
                        \STATE Compute $W_{t,k+1} \leftarrow \texttt{LocalUpdate}(W_{t,k}, X_i, L_i, Q_i, k)$,\\ \quad where $W_{t, k}\eqdef \{W_{t,k}^0, W_{t,k}^1, \ldots, W_{t,k}^L\}$ 
                    \ENDFOR
                \STATE Send $\cup_{l \in L_i} W_{t,K}^l$ to the server  \label{line:send_weights_to_server}
                \ENDFOR
                \STATE Server aggregates $W_{t+1} = \texttt{Aggregation}(\cup_{i \in [n]} \cup_{l \in L_i} W_{t,K}^l)$ 
		\ENDFOR
            \\
            \STATE {\bf Output:} $W_T$
	\end{algorithmic}
\end{algorithm*}

In Line~\ref{line:sample_participating_clients}, we opt for partial client participation by selecting a subset of clients \(\mathcal{C}_t\) from the total pool \( [n] \). 
Unlike the independent subnetwork training approach, Lines~\ref{line:server_sends_layers}--\ref{line:server_sends_weigths} employ a personalized server-client pruning strategy. 
This aligns with the concept of collaborative training. 
Under this approach, we envision each client learning a subset of layers, sticking to smaller neural network architectures of the global model. 
Due to the efficient and privacy-friendly communication, such a method is not only practical but also paves a promising path for future research in FL-type training and large language models.

The server chooses a layer subset \(L_i\) for client \(i\) and dispatches the pruned weights, conditioned by \(P_i\), for the remaining layers. Local training spans \(K\) steps (Lines~\ref{line:local_training_begin}--\ref{line:send_weights_to_server}), detailed in Algorithm~\ref{alg:local_update}. To uphold a privacy-friendly framework, only weights \(\cup_{l \in L_i} W^l_{t, K}\) necessary for training of each client \(i\) are transmitted to the server (Line~\ref{line:send_weights_to_server}). The server concludes by aggregating the weights received from every client to forge the updated model \(W_{t+1}\), as described in Algorithm~\ref{alg:aggregation}. We also provide an intuitive pipeline in Figure \ref{fig:tissue_figure}.

\begin{figure}[!t]
    \centering
    \includegraphics[width=1.0\textwidth]{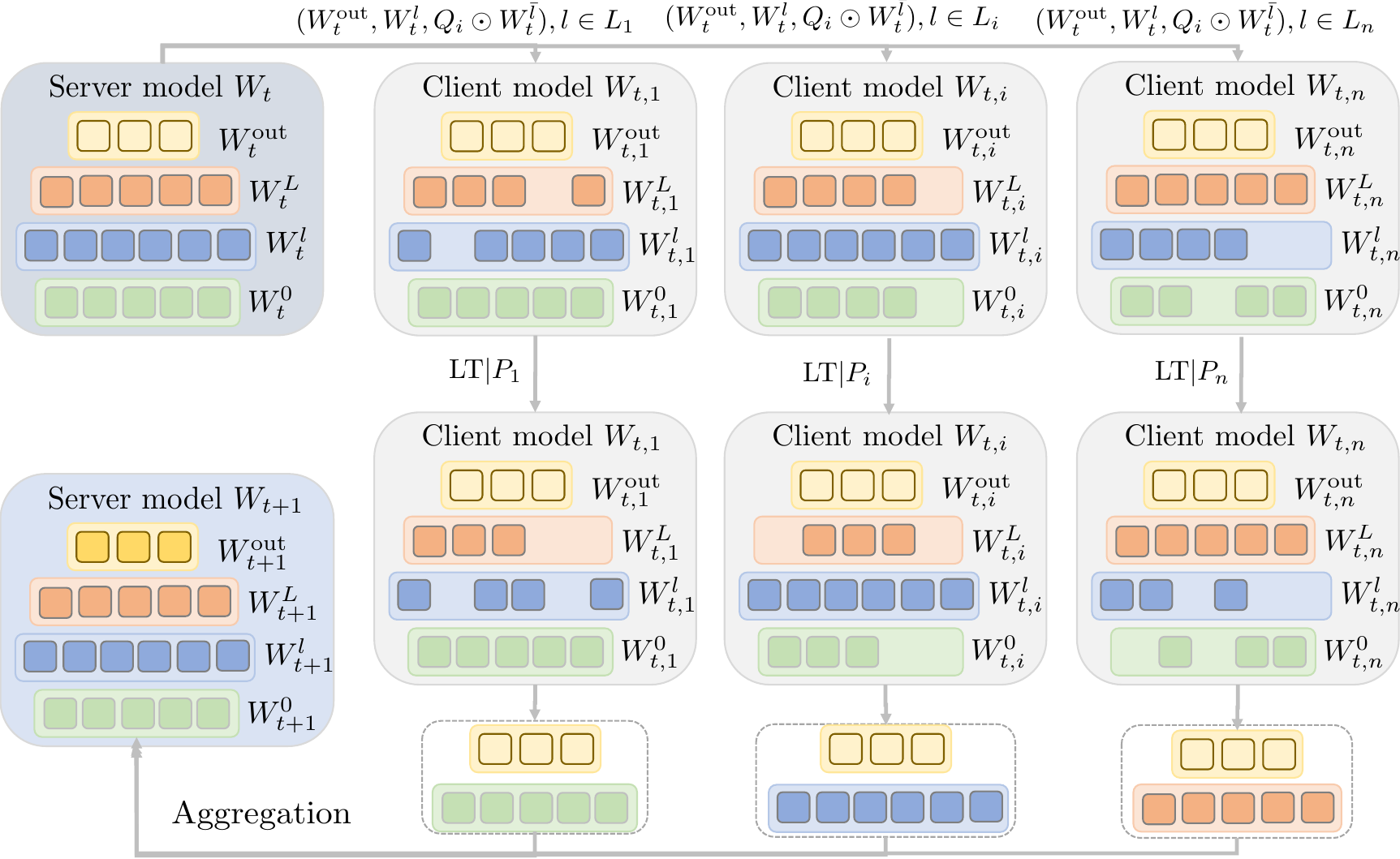}
    \caption{Pipeline illustration of our proposed framework \algname{FedP3}.}
    \label{fig:tissue_figure}
\end{figure}

\paragraph{Local update.} Our proposed framework, \algname{FedP3}, incorporates dynamic network pruning. In addition to personalized task assignments for each client $i$, our local update mechanism supports diverse pruning strategies. Although efficient pruning strategies in FL remain an active research area~\citep{FjORD, FedRolex, Flado}, we aim to determine if our framework can accommodate various strategies and yield significant insights. In this context, we examine different local update rules as described in Algorithm~\ref{alg:local_update}. We evaluate three distinct strategies: \textit{fixed without pruning}, \textit{uniform pruning}, and \textit{uniform ordered dropout}.

Assuming our current focus is on $W^l_{t, k}$, where $l\notin L_i$, after procuring the pruned model conditioned on $P_i$ from the server, we denote the sparse model we obtain by $P_i\odot W^l_{t, 0}$. Here: 
    
\begin{itemize}
    \item \textit{Fixed without pruning} implies that we conduct multiple steps of the local update without additional local pruning, resulting in $P_i\odot W^l_{t, K}$.
    \item \textit{Uniform pruning} dictates that for every local iteration $k$, we randomly generate the probability $q_{i, k}$ and train the model $q_{i, k}\odot P_i\odot W^l_{t, K}$.
    \item \textit{Uniform ordered dropout} is inspired by~\cite{FjORD}. In essence, if $P_i\odot W^l_{t, 0} \in \mathbb{R}^{d_1\times d_2}$ (extendable to 4D convolutional weights; however, we reference 2D fully connected layer weights here), we retain only the subset $P_i\odot W^l_{t, 0}[:q_{i, k}d_1, :q_{i, k}d_2]$ for training purposes. $[:q_{i, k}d_1]$ represents we select the first $q_{i, k}\times d_1$ elements from the total $d_1$ elements. 
\end{itemize}

Regardless of the chosen method, the locally deployed model is given by $\left(\cup_{l\in L_i}W_{t, k}^{l}\right)\cup  \left( \cup_{l \not \in L_i} q_{i,k} \odot P_i \odot W_{t, k}^l \right)$, as highlighted in Algorithm~\ref{alg:local_update} Line~\ref{line:local_update}.

\begin{algorithm*}[!tb]
	\caption{\texttt{LocalUpdate}}
	\label{alg:local_update}
	\begin{algorithmic}[1]
		\STATE {\bf Input:} $W_{t,k}, X_i, L_i, Q_i, k$
            \STATE Generate the step-wise local pruning ratio $q_{i, k}$ conditioned on $P_i$ and $Q_i$
            \STATE Local training $\left(\cup_{l\in L_i}W_{t, k}^{l}\right)\cup  \left( \cup_{l \not \in L_i} q_{i,k} \odot P_i \odot W_t^l \right)$ using local data $X_i$  \label{line:local_update}
            \STATE {\bf Output:} $W_{t,k+1}$
	\end{algorithmic}
\end{algorithm*}

\paragraph{Layer-wise aggregation.} Our Algorithm~\ref{alg:framework} distinctively deviates from existing methods in Line~\ref{line:send_weights_to_server} as each client forwards only a portion of information to the server, thus prompting an investigation into optimal aggregation techniques. 
In Algorithm~\ref{alg:aggregation} we evaluate three aggregation methodologies: 
\begin{itemize}
    \item \textit{Simple averaging} computes the mean of all client contributions that include a specific layer $l$. 
    This option is presented in Line~\ref{line:simple_avg}.
    \item \textit{Weighted averaging} adopts a weighting scheme based on the number of layers client $i$ is designated to train. 
    Specifically, the weight for aggregating $W_{t, K, i}^l$ from client $i$ is given by $|L_i| / \sum_{j=1}^n |L_j|$, analogous to importance sampling.
    This option is presented in Line~\ref{line:weighted_avg}
    \item \textit{Attention-based averaging} introduces an adaptive mechanism where an attention layer is learned specifically for layer-wise aggregation.
    This option is presented in Line~\ref{line:attention_avg}.
\end{itemize}


\begin{algorithm*}[!tb]
	\caption{\texttt{Aggregation}}
	\label{alg:aggregation}
	\begin{algorithmic}[1]
		\STATE {\bf Input:} $\cup_{i \in [n]} \cup_{l \in L_i} W_{t,K}^l$
            \STATE {\textit{Simple Averaging:}}
            \STATE \qquad $W_{t+1}^l \leftarrow \texttt{Avg}\left( W_{t, K, i}^l\right)$ for all nodes with $l\in L_i$  \label{line:simple_avg}
            \STATE {\textit{Weighted Averaging:}}
            \STATE \qquad Construct the aggregation weighting $\alpha_i$ for each client $i$  \label{line:weighted_avg}
            \STATE \qquad $W_{t+1}^l \leftarrow \texttt{Avg}\left(\alpha_i W_{t, K, i}^l\right)$ for all nodes with $l\in L_i$
            \STATE \textit{Attention Averaging:}
            \STATE \qquad Construct an attention mapping layer annoted by function $h$
            \STATE \qquad $W_{t+1}^l \leftarrow h\left(W_{t, K, i}^l\right)$ for all nodes with $l\in L_i$  \label{line:attention_avg}
            \STATE {\bf Output:} $W_{t+1}$
	\end{algorithmic}
\end{algorithm*}



\section{Theoretical Analysis}
Our work refines independent subnetwork training (IST) by adding personalization and layer-level sampling, areas yet to be fully explored (see Appendix \ref{sec:subnetwork_training} for related work). Drawing on the sketch-based analysis from \cite{shulgin2023towards}, we aim to thoroughly analyze \algname{FedP3}, enhancing the sketch-type design concept in both scope and depth.

Consider a global model denoted as $w \in \mathbb{R}^d$. In \cite{shulgin2023towards}, a sketch $\mathcal{C}_i^k \in \mathbb{R}^{d \times d}$ represents submodel computations by weights permutations. We extend this idea to a more general case encompassing both global pruning, denoted as $\mP\in \mbR^{d\times d}$, and personalized model aggregations, denoted as $\mS\in \mbR^{d\times d}$. Now we first present the formal definitions. 

\begin{definition}[Global Pruning Sketch $\mP$]\label{def:sketch1}
    Let a random subset $\mathcal{S}$ of $[d]$ is a proper sampling such that the probability $c_j \eqdef \mathrm{Prob}(j\in S) > 0$ for all $j\in [d]$. Then the biased diagonal sketch with $\mathcal{S}$ is $\mP \eqdef \Diag(p^1_s, p^2_s, \cdots, p^d_s)$, where $p^j_s = 1$ if $j\in S$ otherwise $0$. 
\end{definition}

Unlike \cite{shulgin2023towards}, we assume client-specific sampling with potential weight overlap. For simplicity, we consider all layers pruned from the server to the client, a more challenging case than the partial pruning in \algname{FedP3} (Algorithm~\ref{alg:framework}). The convergence analysis of this global pruning sketch is in Appendix~\ref{sec:theory_global_pruning}.

\begin{definition}[Personalized Model Aggregation Sketch $\mS$]\label{def:sketch2}
    Assume $d\geq n$, $d=sn$, where $s\geq 1$ is an integer. Let $\pi = (\pi_1, \cdots, \pi_d)$ be a random permutation of the set $[d]$. The number of parameters per layer $n_l$, assume $s$ can be divided by $n_l$. Then, for all $x\in \Rd$ and each $i\in [n]$, we define $\mS$ as $\mS\eqdef n\sum^{si}_{j = s(i-1)+1}e_{\pi_j} e^\top_{\pi_j}$. 
\end{definition}
  
Sketch $\mS$ is based on the permutation compressor technique from \cite{szlendak2021permutation}. Extending this idea to scenarios where $d$ is not divisible by $n$ follows a similar approach as outlined in \cite{szlendak2021permutation}. To facilitate analysis, we apply a uniform parameter count $n_l$ across layers, preserving layer heterogeneity. For layers with fewer parameters than $d_L$, zero-padding ensures operational consistency. This uniform distribution assumption maintains our findings' generality and simplifies the discussion. 
Our method assumes $s$ divides $d_l$, streamlining layer selection over individual elements. The variable $v$ denotes the number of layers chosen per client, shaping a more analytically conducive framework for \algname{FedP3}, detailed in Algorithm~\ref{alg:IST} in the Appendix.

\begin{restatable}[Personalized Model Aggregation]{theorem}{modelaggregationtheorem}\label{thm:model_aggregation}
    Let Assumption~\ref{asm:smoothness} holds. Iterations $K$, choose stepsize $\gamma \leq \left\{ \nicefrac{1}{L_{\max}}, \nicefrac{1}{\sqrt{\hat{L}L_{\max} K}}\right\} $. Denote $\Delta_0 \eqdef f(w^0) - f^{\inf}$. Then for any $K\geq 1$, the iterates ${w^k}$ of \algname{FedP3} in Algorithm~\ref{alg:IST} satisfy
    \begin{align}\label{eqn:thm_model_aggregation}
        \min_{0\leq k\leq K -1}\ec{\sqN{\nabla f(w^k)}} \leq \frac{2(1 + \bar{L}L_{\max}\gamma^2)^K}{\gamma K}\Delta_0.
    \end{align}
\end{restatable}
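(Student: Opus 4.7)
The plan is to combine a smoothness-based descent lemma with the variance properties of the personalized aggregation sketch $\mS$ (Definition~\ref{def:sketch2}), and then telescope the resulting recursion. Writing the server-side update of Algorithm~\ref{alg:IST} as $w^{k+1} = w^k - \gamma \vg^k$, where $\vg^k = \frac{1}{n}\sum_{i=1}^n \mS_i^k \nabla f_i(w^k)$ is the aggregated sketched gradient, Assumption~\ref{asm:smoothness} delivers the standard descent inequality
\begin{equation*}
f(w^{k+1}) \leq f(w^k) - \gamma \lin{\nabla f(w^k), \vg^k} + \tfrac{L_{\max}}{2}\gamma^2 \sqN{\vg^k}.
\end{equation*}
This is the cleanest entry point because $L_{\max}$ already appears in the stepsize condition and in the final bound.

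The next step is to take expectation over the random permutation defining $\mS$ at round $k$. By the scaling factor $n$ in Definition~\ref{def:sketch2}, one checks that $\ec{\mS_i^k x} = x$ for every fixed $x$, so the sketched gradient is unbiased: $\ec{\vg^k \mid w^k} = \nabla f(w^k)$, which kills the cross term. For the second moment I will invoke the permutation-compressor identity of \cite{szlendak2021permutation}, which \emph{avoids} the naive factor $n$ in the variance because the sketches $\{\mS_i^k\}_i$ select \emph{disjoint} blocks drawn from a single random permutation. Translating this to our setting and plugging in an averaged smoothness constant $\bar{L}$ of $f$, I expect a bound of the shape
\begin{equation*}
\ec{\sqN{\vg^k}\mid w^k} \;\leq\; \sqN{\nabla f(w^k)} + 2\bar{L}\br{f(w^k) - \finf}.
\end{equation*}
Substituting into the descent inequality and using $\gamma \leq 1/L_{\max}$ to absorb $\tfrac{L_{\max}\gamma^2}{2}\sqN{\nabla f(w^k)}$ into $\tfrac{\gamma}{2}\sqN{\nabla f(w^k)}$ yields the per-step recursion
\begin{equation*}
\ec{f(w^{k+1})} - \finf \;\leq\; (1 + \bar{L}L_{\max}\gamma^2)\br{f(w^k) - \finf} - \tfrac{\gamma}{2}\sqN{\nabla f(w^k)}.
\end{equation*}

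Iterating this recursion from $0$ to $K-1$, dropping the nonnegative terminal function-gap, and using that the geometric weights $(1+\bar{L}L_{\max}\gamma^2)^{K-1-k} \geq 1$ gives $\tfrac{\gamma}{2}\sum_{k=0}^{K-1}\ec{\sqN{\nabla f(w^k)}} \leq (1+\bar{L}L_{\max}\gamma^2)^K \Delta_0$, and bounding the sum below by $K \cdot \min_k \ec{\sqN{\nabla f(w^k)}}$ gives exactly \eqref{eqn:thm_model_aggregation}. The secondary stepsize bound $\gamma \leq 1/\sqrt{\hat{L} L_{\max} K}$ keeps $(1+\bar{L}L_{\max}\gamma^2)^K$ at order $\mathcal{O}(1)$, producing the expected $\mathcal{O}(\Delta_0/(\gamma K))$ rate. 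The main technical obstacle I anticipate is the \emph{layer-level} variance calculation: the original permutation compressor of \cite{szlendak2021permutation} is coordinate-wise, whereas $\mS$ partitions the parameter vector into layer-blocks (with padding, as assumed in Definition~\ref{def:sketch2}). I will have to verify that the same variance cancellation survives the block structure and identify the precise constant that ends up being called $\bar{L}$ versus $\hat{L}$ — the former naturally arising in the per-step variance, the latter in the stepsize that enforces contraction of the $(1+\bar{L}L_{\max}\gamma^2)^K$ factor over $K$ rounds.
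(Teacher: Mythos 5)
Your proposal is correct and follows essentially the same route as the paper: unbiasedness plus the exact second moment of the disjoint-block permutation sketch give $\ec{\sqN{g^k}}=\frac{1}{n}\sum_{i=1}^n\sqN{\nabla f_i(w^k)}$, which together with $\sqN{\nabla f_i(w)}\leq 2L_i(f_i(w)-f^{\inf})$ and a descent lemma yields exactly your recursion; the paper merely packages the last two steps by verifying an expected-smoothness (ABC) condition and citing the Khaled--Richt\'arik convergence theorem instead of telescoping with geometric weights by hand. The one point to fix when you execute the plan is your conjectured second-moment bound: the calculation naturally gives $\ec{\sqN{g^k}}\leq 2L_{\max}(f(w^k)-f^{\inf})$ (i.e.\ $B=0$, $A=L_{\max}$, since there is no clean way to extract $2\bar{L}(f(w^k)-f^{\inf})$ from $\frac{1}{n}\sum_i 2L_i(f_i(w^k)-f^{\inf})$ when the $L_i$ differ), but paired with the descent lemma at the smoothness constant $\bar{L}$ of $f$ (rather than the looser $L_{\max}$ you use) this produces the identical growth factor $(1+\bar{L}L_{\max}\gamma^2)^K$ and hence the stated bound.
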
    

We have achieved a total communication cost of $\mathcal{O}\left(\nicefrac{d}{\epsilon^2}\right)$, marking a significant improvement over unpruned methods. This enhancement is particularly crucial in FL for scalable deployments, especially with a large number of clients. Our approach demonstrates a reduction in communication costs by a factor of $\mathcal{O}\left(\nicefrac{n}{\epsilon}\right)$. In the deterministic setting of unpruned methods, we compute the exact gradient, in contrast to bounding the gradient as in Lemma~\ref{lem:l_smooth_bound}. Remarkably, by applying the smoothness-based bound condition (Lemma~\ref{lem:l_smooth_bound}) to both \algname{FedP3} and the unpruned method, we achieve a communication cost reduction by a factor of $\mathcal{O}(\nicefrac{d}{n})$ for free. This indicates that identifying a tighter upper gradient bound could potentially lead to even more substantial theoretical improvements in communication efficiency. A detailed analysis is available in Appendix~\ref{sec:theory_model_aggregation}. We have also presented an analysis of the locally differential-private variant of \algname{FedP3}, termed \algname{LDP-FedP3}, in Theorem~\ref{thm:convergence_dp_fedp3}.

\begin{restatable}[LDP-FedP3]{theorem}{dpfedpconvergencetheorem}\label{thm:convergence_dp_fedp3}
    Under Assumptions~\ref{asm:smoothness} and \ref{asm:bounded_gradient}, with the use of Algorithm~\ref{alg:DP_FedP3}, consider the number of samples per client to be $m$ and the number of steps to be $K$. Let the local sampling probability be $q\equiv b/m$. For constants $c^\prime$ and $c$, and for any $\epsilon < c^\prime q^2K$ and $\delta\in (0, 1)$, \algname{LDP-FedP3} achieves $(\epsilon, \delta)$-LDP with $\sigma^2 = \frac{cKC^2\log(1/\epsilon)}{m^2\epsilon^2}$. 

    Set $K=\max\Big\{\frac{m\epsilon \sqrt{L \Delta_0}}{C\sqrt{cd\log(1/\delta)}}, \frac{m^2\epsilon^2}{cd\log(1/\delta)}\Big\}$ and $\gamma = \min\Big\{\frac{1}{L}, \frac{\sqrt{\Delta_0 cd \log(1/\delta)}}{C m\epsilon\sqrt{L}}\Big\}$, we have:
    $$
    \frac{1}{K}\sum_{k=0}^{K-1} \ec{\sqn{\nabla f(w^t)}} \leq \frac{2C\sqrt{Lcd\log(1/\sigma)}}{m\epsilon} = \mathcal{O}\left(\frac{C\sqrt{Ld\log(1/\delta)}}{m\epsilon} \right).
    $$
    Consequently, the total communication cost is:
    $$
    C_{\mathrm{LDP-FedP3}} = \mathcal{O}\left( \frac{m\epsilon \sqrt{dL \Delta_0}}{C\sqrt{\log(1/\delta)}} + \frac{m^2\epsilon^2}{\log(1/\delta)}\right).
    $$
    \end{restatable}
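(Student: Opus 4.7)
}
The plan is to split the argument into three pieces: (i) a privacy accountancy step that justifies the stated noise variance $\sigma^{2}$; (ii) a non-convex SGD-style descent analysis that tracks the extra variance injected by the Gaussian mechanism and by the random layer sketch $\mS$ from Definition~\ref{def:sketch2}; and (iii) an optimization-of-constants step that turns the raw bound into the rate and communication cost claimed in the theorem. I will closely follow the structure used to establish Theorem~\ref{thm:model_aggregation}, but with the added ingredient of clipped, noisy local updates.

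First, for the LDP guarantee I would invoke the subsampled Gaussian mechanism together with the moments-accountant / R\'enyi DP composition of Abadi et al. Because each client clips its per-sample gradients to norm $C$ (implied by Assumption~\ref{asm:bounded_gradient}), draws a mini-batch of size $b$ from $m$ local samples with rate $q=b/m$, and adds Gaussian noise $\mathcal N(0,\sigma^{2}\mI)$ at each of the $K$ rounds, the standard composition result gives $(\epsilon,\delta)$-LDP provided $\sigma^{2}\ge c\,K C^{2}\log(1/\delta)/(m^{2}\epsilon^{2})$ for some absolute constant $c$ and all $\epsilon<c'q^{2}K$; this is exactly the prescription in the statement. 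Because LDP is per-client and the amplification-by-sampling and composition are local, no extra argument is needed for the server-side aggregation step, which is post-processing.

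For the utility bound, I would start from the $L$-smoothness descent inequality
\begin{equation*}
f(w^{k+1})\le f(w^{k}) - \gamma\,\lin{\nabla f(w^{k}),\,g^{k}} + \tfrac{L\gamma^{2}}{2}\sqn{g^{k}},
\end{equation*}
where $g^{k}$ is the server update assembled from the clipped, noisy, sketched client messages. Taking conditional expectation, I would decompose $g^{k}$ into its expected direction (the true gradient, up to a clipping bias that is controlled by Assumption~\ref{asm:bounded_gradient}) plus three zero-mean noise terms: stochastic mini-batch noise of variance $\mathcal O(C^{2}/b)$, sketch-induced noise of the Bregman form handled in the proof of Theorem~\ref{thm:model_aggregation}, and the isotropic DP noise of variance $d\sigma^{2}$. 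Summing over $k=0,\dots,K-1$, telescoping $f(w^{0})-f^{\inf}=\Delta_{0}$, and dividing by $\gamma K$ yields the standard non-convex bound
\begin{equation*}
\frac{1}{K}\sum_{k=0}^{K-1}\ec{\sqn{\nabla f(w^{k})}}\;\le\;\frac{2\Delta_{0}}{\gamma K}+L\gamma\bigl(\text{stochastic}+\text{sketch}+d\sigma^{2}\bigr).
\end{equation*}
Substituting the $\sigma^{2}$ from step (i) turns the DP term into $L\gamma\,d\,cKC^{2}\log(1/\delta)/(m^{2}\epsilon^{2})$; the other two terms are dominated once $K$ is large enough.

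Finally I would tune $\gamma$ and $K$ to balance the $\Delta_{0}/(\gamma K)$ term against the DP term: the optimal stepsize is the usual $\gamma=\min\{1/L,\sqrt{\Delta_{0}}/\sqrt{Ld\sigma^{2}K}\}$, which after plugging in $\sigma^{2}$ reduces to the $\gamma$ stated in the theorem, and the choice of $K$ is the threshold at which $1/L$ and the noise-driven stepsize coincide, producing the two-branch maximum for $K$. The resulting gradient-norm bound is $\mathcal O(C\sqrt{Ld\log(1/\delta)}/(m\epsilon))$, and multiplying by the per-round communication (proportional to $d$ modulo the layer sketching factor already absorbed in Theorem~\ref{thm:model_aggregation}) gives the claimed $C_{\algname{LDP-FedP3}}$. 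The main obstacle is the second paragraph: handling the clipping bias and the sketch variance simultaneously while keeping the Gaussian-noise term cleanly separated, since clipping breaks unbiasedness and one must argue either via Assumption~\ref{asm:bounded_gradient} that no clipping occurs with high probability, or via a standard bias-variance decomposition that contributes only lower-order terms.
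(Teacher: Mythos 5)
Your proposal follows essentially the same route as the paper: an $L$-smoothness descent step combined with the unbiasedness and second-moment bounds for the aggregation sketch $\mS_i^k$, a variance decomposition into mini-batch noise $\mathcal O(C^2/b)$, sketch-induced loss of averaging, and isotropic DP noise $d\sigma^2$, a privacy guarantee imported from the subsampled-Gaussian/moments-accountant machinery (the paper cites the SoteriaFL privacy theorem, which packages the same composition result), and the same balancing of $\gamma$ and $K$ followed by multiplying the per-round payload $n\cdot d/n = d$ by $K$. The one spurious complication is your concern about clipping bias: Algorithm~\ref{alg:DP_FedP3} performs no clipping, and Assumption~\ref{asm:bounded_gradient} directly bounds the sensitivity while keeping the gradient estimator unbiased, so that obstacle never arises.
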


We establish the privacy guarantee and communication cost of \algname{LDP-FedP3}. Our analysis aligns with the communication complexity in \cite{li2022soteriafl} while providing a more precise convergence bound. Further details and comparisons with existing work are discussed in Appendix \ref{sec:theory_dp_aggregation}.

\color{black}
\section{Experiments}
  
  
\subsection{Datasets and Splitting Techniques}


We utilize benchmark datasets CIFAR10/100~\cite{CIFAR}, a subset of EMNIST labeled EMNIST-L~\cite{EMNIST}, and FashionMNIST~\cite{FashionMNIST}, maintaining standard train/test splits as in \cite{mcmahan2017communication} and \cite{li2020federated}. While CIFAR100 has 100 labels, the others have 10, with a consistent data split of 70\% for training and 30\% for testing. Details on these splits are in Table~\ref{tab:statistics} in the Appendix. For non-iid splits in these datasets, we employ class-wise and Dirichlet non-iid strategies, detailed in Appendix~\ref{sec:data_distributions}.

\subsection{Optimal Layer Overlapping Among Clients}
\paragraph{Datasets and Models Specifications.} 
{ In this section, our objective is to develop a communication-efficient architecture that also preserves accuracy. We conducted extensive experiments on recognized datasets like CIFAR10/100 and FashionMNIST, using a neural network with two convolutional layers (denoted as \texttt{Conv}) and four fully-connected layers (\texttt{FC}). For EMNIST-L, our model includes four \texttt{FC} layers including the output layer. This approach simplifies the identification of optimal layer overlaps among clients. We provide the details of network architectures in Appendix~\ref{sec:network_architecture}.}

 \paragraph{Layer Overlapping Analysis.} {Figure~\ref{fig:overlapping_main} presents a comparison of different layer overlapping strategies. For Optional Pruning Uniformly with selection of 2 layers (\algname{OPU2})  represents the selection of two uniformly chosen layers from the entire network for training, while \algname{OPU3} involves 3 such layers. \algname{LowerB} denotes the scenario where only one layer's parameters are trained per client, serving as a potential lower bound benchmark. All clients participate in training the final \texttt{FC} layer (denoted as \texttt{FFC}). ``\texttt{S1}" and ``\texttt{S2}" signify class-wise and Dirichlet data distributions, respectively. For example, \texttt{FedAvg-S1} shows the performance of \algname{FedAvg} under a class-wise non-iid setting. Given that a few layers are randomly assigned for each client to train, we assess the communication cost on average. 
In CIFAR10/100 and FashionMNIST training, by design, 
we obtain a 20\% communication reduction for \algname{OPU3}, 40\% for \algname{OPU2}, and 60\% for \algname{LowerB}. }
{ Remarkably, OPU3 shows comparable performance to \algname{FedAvg}, with only 80\% of the parameters communicated. Computational results in the Appendix~\ref{sec:quantitative_parameters} (Figure \ref{fig:parameter_comp}) elucidate the outcomes of randomly sampling a single layer (\algname{LowerB}). Particularly in CIFAR10, clients training on \texttt{FC2+FFC} layers face communication costs more than 10,815 times higher than those training on \texttt{Conv1+FFC} layers, indicating significant model heterogeneity.}

\begin{figure}[!tb]
     \centering
     \begin{subfigure}[b]{0.48\textwidth}
         \centering
         \includegraphics[width=1.0\textwidth, trim=0 13 0 0, clip]{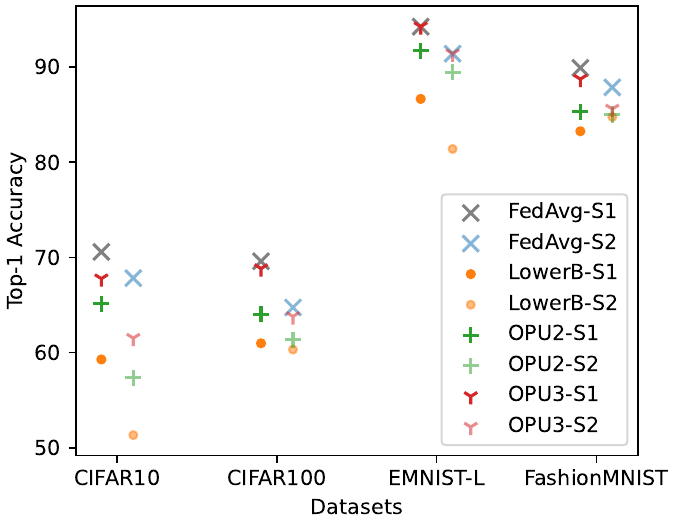}
     \end{subfigure}
     \hfill
     \begin{subfigure}[b]{0.48\textwidth}
         \centering
         \includegraphics[width=1.0\textwidth, trim=0 13 0 0, clip]{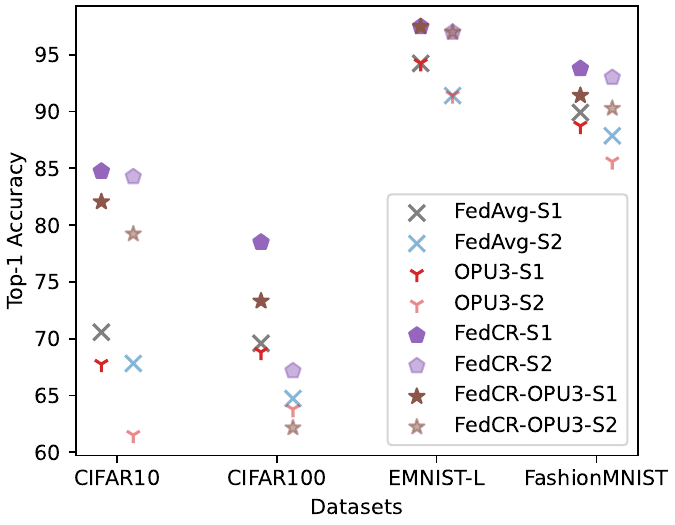}
     \end{subfigure}
        \caption{ Comparative Analysis of Layer Overlap Strategies: The left figure presents a comparative study of different overlapping layer configurations across four major datasets. On the right, we extend this comparison to include the state-of-the-art personalized FL method, \algname{FedCR}. In this context, \texttt{S1} refers to a class-wise non-iid distribution, while \texttt{S2} indicates a Dirichlet non-iid distribution.}
        \label{fig:overlapping_main}
\end{figure}

{ Beyond validating \algname{FedAvg}, we compare with the state-of-the-art personalized FL method \algname{FedCR}~\cite{FedCR} (details in Appendix~\ref{sec:training_details}), as shown on the right of Figure~\ref{fig:overlapping_main}. Our method (\algname{FedCR-OPU3}), despite 20\% lower communication costs, achieves promising performance with only a 2.56\% drop on \texttt{S1} and a 3.20\% drop on \texttt{S2} across four datasets. Additionally, Figure~\ref{fig:overlapping_main} highlights the performance differences between the two non-iid data distribution strategies, \texttt{S1} and \texttt{S2}. The average performance gap across \algname{LowerB}, \algname{OPU2}, and \algname{OPU3} is 3.55\%. This minimal reduction in performance across all datasets underscores the robustness and stability of our \algname{FedP3} pruning strategy in diverse data distributions within FL.}

\paragraph{Larger Network Verifications.}
Our assessment extends beyond shallow networks to the more complex ResNet18 model~\cite{ResNet}, tested with CIFAR10 and CIFAR100 datasets. Figure \ref{fig:resnet18_arch} illustrates the ResNet18 architecture, composed of four blocks, each containing four layers with skip connections, plus an input and an output layer, totaling 18 layers. A key focus of our study is to evaluate the efficiency of training this heterogeneous model using only a partial set of its layers. We performed layer ablations in blocks 2 and 3 (\texttt{B2} and \texttt{B3}), as shown in Figure~\ref{tab:resnet18_results}. The notation \texttt{-B2-B3(full)} indicates complete random pruning of \texttt{B2} or \texttt{B3}, with the remaining structure sent to the server. \texttt{-B2(part)} refers to pruning the first or last two layers in \texttt{B2}. { We default the global pruning ratio from server to client at 0.9, implying that the locally deployed model is approximately 10\% smaller than the global model.} Results in Figure~\ref{tab:resnet18_results} demonstrate that dropping random layers from ResNet18 does not significantly impact performance, sometimes even enhancing it.
{ Compared with \texttt{Full}, \texttt{-B2(part)} and \texttt{-B3(part)} achieved a 6.25\% reduction in communication costs 
with only a 1.03\% average decrease in performance. Compared to the standard \algname{FedAvg} without pruning, this is a 16.63\% reduction, showcasing the efficiency of our \algname{FedP3} method. Remarkably, \texttt{-B3(part)} even surpassed the \texttt{Full} model in performance. Additionally, \texttt{-B2-B3(full)} resulted in a 12.5\% average reduction in communication costs (21.25\% less compared to unpruned \algname{FedAvg}), with just a 6.32\% performance drop on CIFAR10 and CIFAR100. These results demonstrate the potential of \algname{FedP3} for effective learning in LLMs.}    
     
\begin{table}[!tb]  
\begin{minipage}[!t]{.46\linewidth}
    \centering
    \includegraphics[width=1.0\textwidth]{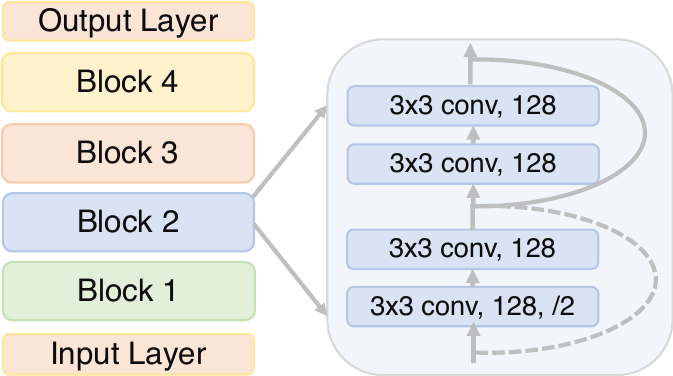}
    \captionof{figure}{ResNet18 architecture.}\label{fig:resnet18_arch}
\end{minipage}
\hfill
\begin{minipage}[!t]{.49\linewidth}
    \centering
    \def\arraystretch{1.15}
    \begin{tabular}{l|c|c}
    \bf Method &{\bf CIFAR10} & {\bf CIFAR100}\\ \hline
    Full & 73.25 & 63.33\\ \hline
    -B2-B3 (full) & 65.68 & 58.26\\
    -B2 (part) & 72.09 & 61.11\\
    -B3 (part) & 73.47 & 62.39\\  
    \bottomrule
    \end{tabular}
    \captionof{table}{Performance of ResNet18 under class-wise non-iid conditions. { The global pruning ratio from server to client is maintained at 0.9 for all baseline comparisons by default.}}
\label{tab:resnet18_results}
\end{minipage}
\end{table}

\begin{figure}
     \centering
     \begin{subfigure}[b]{0.48\textwidth}
         \centering
         \includegraphics[width=1.0\textwidth, trim=0 0 0 0, clip]{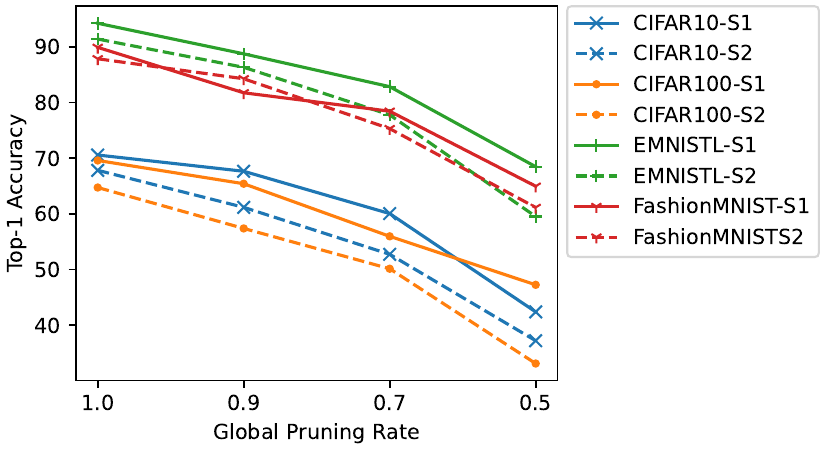}
     \end{subfigure}
     \hfill  
     \begin{subfigure}[b]{0.48\textwidth}
         \centering
         \includegraphics[width=1.0\textwidth, trim=0 0 0 0, clip]{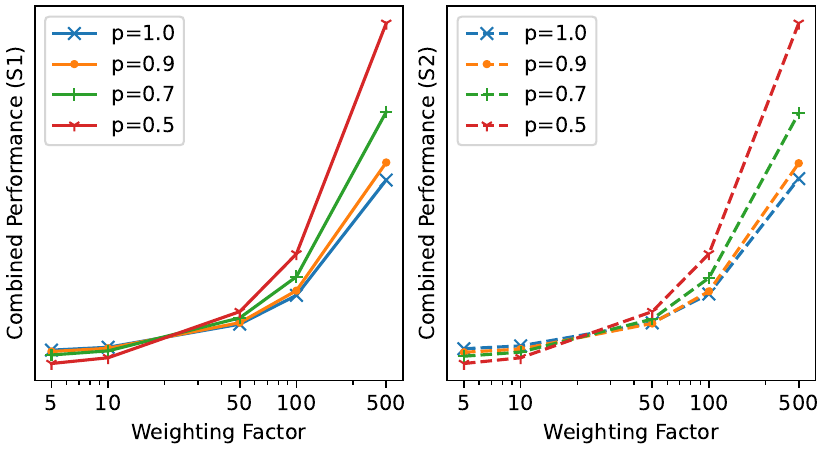}
     \end{subfigure}
        \caption{ Comparative Analysis of Server to Client Global Pruning Strategies: The left portion displays Top-1 accuracy across four major datasets and two distinct non-IID distributions, varying with different global pruning rates. On the right, we quantitatively assess the trade-off between model size and accuracy.
        }
        \label{fig:global_pruning_main}
\end{figure}  

{
    \subsection{Key Ablation Studies}
    \vspace{-1.5mm}
    Our framework, detailed in Algorithm \ref{alg:framework}, critically depends on the choice of pruning strategies. The \algname{FedP3} algorithm integrates both server-to-client global pruning and client-specific local pruning. Global pruning aims to minimize the size of the model deployed locally, while local pruning focuses on efficient training and enhanced robustness.
    
    \subsubsection{Exploring Server to Client Global Pruning Strategies}
    We investigate various global pruning ratios and their impacts, as shown in the left part of Figure~\ref{fig:global_pruning_main}. A global pruning rate of 0.9 implies the local model has 10\% fewer parameters than the global model. When comparing unpruned (rate 1.0) scenarios, we note an average performance drop of 5.32\% when reducing the rate to 0.9, 12.86\% to 0.7, and a significant 27.76\% to 0.5 across four major datasets and two data distributions. The performance decline is more pronounced at a 0.5 pruning ratio, indicating substantial compromises in performance for halving the model parameters.
    
    In the right part of Figure~\ref{fig:global_pruning_main}, we evaluate the trade-off between model size and accuracy. Assuming the total global model parameters as $N$ and accuracy as $\text{Acc}$, the global pruning ratio as $r$, we weigh the local model parameters against accuracy using a factor $\alpha \eqdef N/\text{Acc} > 0$, where the x-axis represents $\text{Acc}+\alpha / r$. A higher $\alpha$ indicates a focus on reducing parameter numbers for large global models, accepting some performance loss. This becomes increasingly advantageous with higher $\alpha$ values, suggesting a promising area for future exploration, especially with larger-scale models.
}

\subsubsection{Exploring Client-Wise Local Pruning Strategies}
\vspace{-1.5mm}
Next, we are interested in exploring the influence of different local pruning strategies. Building upon our initial analysis, we investigate scenarios where our framework permits varying levels of local network pruning ratios. Noteworthy implementations in this domain resemble \algname{FjORD}~\citep{FjORD}, \algname{FedRolex}~\citep{FedRolex}, and \algname{Flado}~\citep{Flado}. Given that the only partially open-source code available is from \algname{FjORD}, we employ their layer-wise approach to network sparsity. The subsequent comparisons and their outcomes are presented in Table\ref{tab:abs_local_pruning}. The details of different pruning strategies, including \texttt{Fixed}, \texttt{Uniform} and \texttt{Ordered Dropout} are presented in the above Approach section. { "Fixed", "Uniform", "Ordered Dropout" represents \textit{Fixed without pruning}, \textit{Uniform pruning}, and \textit{Uniform order dropout} in the Approach section, respectively.} From the results in Table.~\ref{tab:abs_local_pruning}, we can see the difference between \texttt{Uniform} and \texttt{Ordered Dropout} strategies will be smaller with small global pruning ratio $p$ from 0.9 to 0.7. Besides, in our experiments, \texttt{Ordered Dropout} is no better than the simple \texttt{Uniform} strategy for local pruning.   

\begin{table*}[!tb]
    \centering
    \caption{Comparison of different network local pruning strategies.  Global pruning ratio $p$ is 0.9.}
    \label{tab:abs_local_pruning}
    \begin{threeparttable}
    \def\arraystretch{1.15}
    \resizebox{0.95\textwidth}{!}{%
    \begin{tabular}{l|c|c|c|c}
    \bf Strategies &{\bf CIFAR10} & {\bf CIFAR100} & {\bf EMNIST-L} & \bf {\bf FashionMNIST} \\ \hline
        Fixed & 67.65 / 61.17 & 65.41 / 57.38 & 88.75 / 86.33 & 81.75 / 84.27\\ \hline
        Uniform ($p=0.9$) & 65.51 / 60.10 & 64.33 / 58.20 & 85.14 / 84.29 & 78.81 / 77.24\\
        Ordered Dropout ($p=0.9$) & 61.73 / 58.82 & 61.11 / 53.28 & 82.54 / 80.18 & 75.45 / 73.27\\ \hline
        Uniform ($p=0.7$) & 60.78 / 56.41 & 60.35 / 54.88 & 77.39 / 75.82 & 72.66 / 70.37\\
        Ordered Dropout ($p=0.7$) & 58.90 / 53.38 & 59.72 / 50.03 & 72.19 / 70.30 & 70.21 / 67.58\\
        \hline
    \end{tabular}}
    \end{threeparttable}
\end{table*}

\subsubsection{Exploring Adaptive Model Aggregation Strategies} { In this section, we explore a range of weighting strategies, including both simple and advanced averaging methods, primarily focusing on the CIFAR10/100 datasets. We assign clients with $1-3$ layers (\texttt{OPU1-2-3}) or $2-3$ layers (\texttt{OPU2-3}) randomly. In Algorithm~\ref{alg:aggregation}, we implement two aggregation approaches: \texttt{simple} and \texttt{weighted} aggregation.}

\begin{minipage}[t]{0.68\textwidth}
\vspace{-44mm}
{ 
    Let $L^l$ denote the set of clients involved in training the $l$-th layer, where $l\in \mathcal{L}$. The server's received weights for layer $l$ from client $i$ are represented as $W_{t, K, i}^l$. The general form of model aggregation is thus defined as:
    \vspace{-1mm}
    $$
            W_{t+1}^l = \sum_{j=1}^{L^l} \alpha_i W_{t, K, i}^l.
    $$
    If $\alpha_i$ is initialized as $\nicefrac{1}{|L^l|}$, this constitutes \texttt{simple} mean averaging. Considering $N_i$ as the total number of layers for client $i$ and $n$ as the total number of clients, if $\alpha_i = \nicefrac{N_i}{\sumjn N_j}$, this method is termed \texttt{weighted} averaging. The underlying idea is that clients with more comprehensive network information should have greater weight in parameter contribution. A more flexible approach is \texttt{attention} averaging, where $\alpha_i$ is learnable, encompassing \texttt{simple} and \texttt{weighted} averaging as specific cases. 
}  
\end{minipage}
\hfill
\begin{minipage}[t]{0.3\textwidth}
    \includegraphics[width=1.0\textwidth]{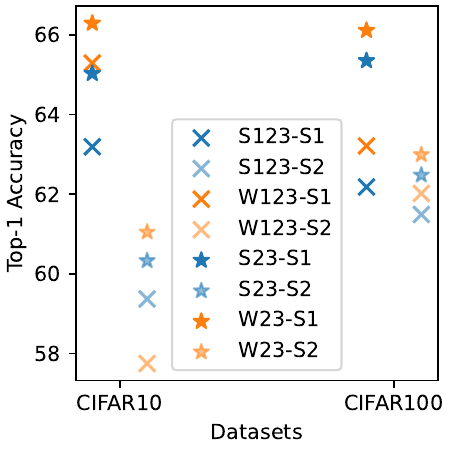}
    \captionof{figure}{ Comparison of various model aggregation strategies. $p=0.9$.}\label{fig:abs_aggregation}
    \vspace{-2.0mm}
\end{minipage}
  
\vspace{-1.5mm}
{ Future research may delve into a broader range of aggregation strategies. 
Our findings, shown in Figure \ref{fig:abs_aggregation}, include \texttt{S123-S1} for the \texttt{OPU1-2-3} method with simple aggregation in class-wise non-iid distributions, and \texttt{W23-S2} for \texttt{OPU2-3} with weighted aggregation in Dirichlet non-iid. The data illustrates that \texttt{weighted} averaging relatively improves over \texttt{simple} averaging by 1.01\% on CIFAR10 and 1.05\% on CIFAR100. Furthermore, \texttt{OPU-2-3} consistently surpasses \texttt{OPU1-2-3} by 1.89\%, empirically validating our hypotheses.} 
\section{Conclusion}
\vspace{-3mm}
In this paper, we introduce FedP3, a nuanced solution designed to tackle both data and model heterogeneities while prioritizing privacy. We have precisely defined the concepts of personalization, privacy, and pruning as central to our analysis. The efficacy of each component is rigorously validated through comprehensive proofs and extensive experimental evaluations.

\bibliography{mybib}
\bibliographystyle{iclr2024_conference}

\newpage
\tableofcontents

\newpage
\appendix

\section{Extended Related Work}\label{ref:app_extended_related_work}

\color{black}
\subsection{Federated Network Pruning}\label{sec:federated_network_pruning}

We introduce two distinct types of network pruning within our study: 1) global pruning, which extends from server to client, and 2) local pruning, where each client's network is pruned based on its own specific data. In our setting, we assume federated pruning is the scenario with both possible global and local pruning. 
Federated network pruning, a closely related field, pursues the objective of identifying the optimal or near-optimal pruned neural network at each communication from the server to the clients, as documented in works of \citet{PruneFL} and \citet{FedTiny}, for example.

During the initial phase of global pruning, \citep{PruneFL} isolates a single potent and reliable client to initiate model pruning. The subsequent stage of local pruning incorporates all clients, advancing the adaptive pruning process. This process involves not only parameter removal but also the reintroduction of parameters, complemented by the standard {FedAvg}~\citep{mcmahan2017communication}. However, the need for substantial local memory to record the updated relevance measures of all parameters in the full-scale model poses a challenge. As a solution to this problem, \cite{FedTiny} proposes an adaptive batch normalization and progressive pruning modules that utilize sparse local computation. Yet, these methods overlook explicit considerations for constraints related to client-side computational resources and communication bandwidth.

Our primary attention gravitates towards designing distinct local pruning methods, such as \citep{FjORD}, \citep{FedRolex}, and \citep{Flado}. Instead of learning the optimal or suboptimal pruned local network, each client attempts to identify the optimal adaptive sparsity method. The work of \cite{FjORD} has been groundbreaking, as they introduced Ordered Dropout to navigate this issue, achieving commendable results. It's noteworthy that our overarching framework is compatible with these methods, facilitating straightforward integration of diverse local pruning methods.
There are other noticeable methods, such as \citep{HeteroFL}, which focuses on reducing the size of each layer in neural networks. In contrast, our approach contemplates a more comprehensive layer-wise selection and emphasizes neuron-oriented sparsity.

As of our current knowledge, no existing literature directly aligns with our approach, despite its practicality and generality. Even the standard literature regarding federated network pruning appears to be rather constrained.

\subsection{Subnetwork Training}\label{sec:subnetwork_training}
Our research aligns with the rising interest in Independent Subnetwork Training (IST), a technique that partitions a neural network into smaller components. Each component is trained in a distributed parallel manner, and the results are subsequently aggregated to update the weights of the entire model. The decoupling in IST enables each subnetwork to operate autonomously, using fewer parameters than the complete model. 
This not only diminishes the computational cost on individual nodes but also expedites synchronization.

This approach was introduced by \cite{yuan2022distributed} for networks with fully connected layers and was later extended to ResNets \cite{dun2022resist} and Graph architectures \cite{wolfe2023gist}. Empirical evaluations have consistently posited IST as an attractive strategy, proficiently melding data and model parallelism to train expansive models even with restricted computational resources.

Further theoretical insights into IST for overparameterized single hidden layer neural networks with ReLU activations were presented by \cite{liao2022convergence}. Concurrently, \cite{shulgin2023towards} revisited IST, exploring it through the lens of sketch-type compression.

While acknowledging the adaptation of IST to FL using asynchronous distributed dropout techniques \cite{dun2023efficient}, our approach diverges significantly from prior works. We advocate that clients should not relay the entirety of their subnetworks to the central server—both to curb excessive networking costs and to safeguard privacy. 
Moreover, our model envisions each client akin to an assembly line component: each specializes in a fraction of the complete neural network, guided by its intrinsic resources and computational prowess.

In Section~\ref{sec:federated_network_pruning} and \ref{sec:subnetwork_training}, we compared our study with pivotal existing research, focusing on federated network pruning and subnetwork training. Responding to reviewer feedback, we have broadened the scope of our related work section to include a more extensive comparison with other significant studies.
    
\subsection{Model Heterogeneity} Model heterogeneity denotes the variation in local models trained across diverse clients, as highlighted in previous research~\citep{kairouz2021advances, ye2023heterogeneous}. A seminal work by \cite{smith2017federated} extended the well-known COCOA method~\citep{jaggi2014communication, ma2015adding}, incorporating system heterogeneity by randomly selecting the number of local iterations or mini-batch sizes. However, this approach did not account for variations in client-specific model architectures or sizes. Knowledge distillation has emerged as a prominent strategy for addressing model heterogeneity in Federated Learning (FL). \cite{li2019fedmd} demonstrated training local models with distinct architectures through knowledge distillation, but their method assumes access to a large public dataset for each client, a premise not typically found in current FL scenarios. Additionally, their approach, which shares model outputs, contrasts with our method of sharing pruned local models. Building on this concept, \cite{lin2020ensemble} proposed local parameter fusion based on model prototypes, fusing outputs of clients with similar architectures and employing ensemble training on additional unlabeled datasets. \cite{tan2022fedproto} introduced an approach where clients transmit the mean values of embedding vectors for specific classes, enabling the server to aggregate and redistribute global prototypes to minimize the local-global prototype distance. \cite{he2021fednas} developed FedNAS, where clients collaboratively train a global model by searching for optimal architectures, but this requires transmitting both full network weights and additional architecture parameters. Our method diverges from these approaches by transmitting only weights from a subset of neural network layers from client to server.

 
\color{black}
\section{Experimental Details}\label{sec:experimental_details}
\subsection{Statistics of Datasets}\label{sec:stats_of_datasets}
We provide the statistics of our adopted datasets in Table.~\ref{tab:statistics}.

\begin{table}[!htbp]
    \centering
    \begin{tabular}{c|c|c|c}
    \toprule
    Dataset & \# data & \# train per client & \# test per client \\ \midrule
    EMNIST-L~\citep{EMNIST} & 48K+8K & 392 & 168\\
    FashionMNIST~\citep{FashionMNIST} & 60K+10K & 490 & 210\\
    CIFAR10~\citep{CIFAR} & 50K+10K & 420 & 180\\
    CIFAR100~\citep{CIFAR} & 50K+10K & 420 & 180\\
    \bottomrule
    \end{tabular}
    \caption{Dataset statistics, with data uniformly divided among 100 clients by default.}
    \label{tab:statistics}
\end{table}

\subsection{Data Distributions}\label{sec:data_distributions}
We emulated non-iid data distribution among clients using both class-wise and Dirichlet non-iid scenarios.

\begin{itemize}
    \item Class-wise: we designate fixed classes directly to every client, ensuring uniform data volume per class. 
    As specifics, EMNIST-L, FashionMNIST, and CIFAR10 assign 5 classes per client, while CIFAR100 allocates 15 classes for each client.
    \item Dirichlet: following an approach similar to FedCR~\citep{FedCR}, we use a Dirichlet distribution over dataset labels to create a heterogeneous dataset. 
    Each client is assigned a vector (based on the Dirichlet distribution) that corresponds to class preferences, dictating how labels--and consequently images--are selected without repetition. 
    This method continues until every data point is allocated to a client. 
    The Dirichlet factor indicates the level of data non-iidness. 
    With a Dirichlet parameter of 0.5, about 80\% of the samples for each client on EMNIST-L, FashionMNIST, and CIFAR10 are concentrated in four classes. For 
    CIFAR100, the parameter is set to 0.3. 
\end{itemize}

\subsection{Network Architectures}\label{sec:network_architecture}
Our primary experiments utilize four widely recognized datasets, with detailed descriptions provided in the Experiments section. For the CIFAR10/100 and FashionMNIST experiments, we opt for CNNs comprising two convolutional layers and four fully-connected layers as our standard network architecture. In contrast, for the EMNIST-L experiments, we employ a four-layer MLP architecture. The specifics of these architectures are outlined in Table~\ref{tab:network_arch}. Additionally, the default ResNet18 network architecture is selected for our layer-overlapping experiments.
    
\begin{table}[!htbp]
    \centering
    \begin{tabular}{c|c|c}
        \bf Layer Type & \bf Size & \bf \# of Params.\\ \hline
        Conv + ReLu & $5\times 5\times 64$ & 4,864 / 1,664\\
        Max Pool & $2\times 2$ & 0\\
        Conv + ReLu & $5\times 5\times 64$ & 102, 464\\
        Max Pool & $2\times 2$ & 0\\
        FC + ReLu & $1600\times 1024$ & 1,638,400\\
        FC + ReLu & $1024\times 1024$ & 1,048,576\\
        FC + ReLu & $1024\times 10/100$ & 10,240 / 102,400\\
        \bottomrule
    \end{tabular}
    \begin{tabular}{c|c|c}
       \bf Layer Type  & \bf Size & \bf \# of Params.\\ \hline
       FC + ReLu & $784\times 1024$ & 802,816\\
       FC + ReLu & $1024\times 1024$ & 1,048,576\\
       FC + ReLu & $1024\times 1024$ & 1,048,576\\
       FC & $1024\times 10$ & 10,240\\
       \bottomrule
    \end{tabular}
    \caption{ The left figure depicts the neural network architecture employed for the CIFAR10/100 and FashionMNIST experiments. Conversely, the right figure illustrates the default MLP (Multi-Layer Perceptron) architecture used specifically for the EMNIST-L experiments.}\label{tab:network_arch}
\end{table}
 
\subsection{Training Details}\label{sec:training_details}
Our experiments were conducted on NVIDIA A100 or V100 GPUs, depending on their availability in our cluster. The framework was implemented in PyTorch 1.4.0 and torchvision 0.5.0 within a Python 3.8 environment. Our initial code, based on FedCR~\cite{FedCR}, was refined to include hyper-parameter fine-tuning. A significant modification was the use of an MLP network with four \texttt{FC} layers for EMNIST-L performance evaluation. We standardized the experiments to 500 epochs with a local training batch size of 48. The number of local updates was set at 10 to assess final performance. For the learning rate, we conducted a grid search, exploring a range from $10^{-5}$ to 0.1, with a fivefold increase at each step. In adapting FedCR, we used their default settings and fine-tuned the $\beta$ parameter across values ${0.0001, 0.0005, 0.001, 0.005, 0.01}$ for all datasets.

\subsection{Quantitative Analysis of Reduced Parameters}\label{sec:quantitative_parameters}
We provide a quantitative analysis of parameter reduction across four datasets, as shown in Figure~\ref{fig:parameter_comp}. The x-axis represents different global pruning ratios, and the y-axis indicates the number of parameters. For simplicity, we consider a scenario where, aside from the final fully-connected layer, each client trains only one additional layer, akin to the LowerB method used in our earlier experiments. For instance, the label \texttt{FC} refers to a condition where only \texttt{FC2} and the final layer are fully trained, with other layers being pruned during server-to-client transfer and dropped in server communication.

With a constant global pruning ratio, the left part of the figure shows the total number of parameters in the locally deployed model post server-to-client pruning, while the right part illustrates the communication cost for each scenario. The numbers atop each bar indicate the relative differences between the largest and smallest elements under various conditions. Across all datasets, we note that higher global pruning ratios result in progressively smaller deployed models. For example, at a 0.5 global pruning ratio, the model size for clients training the \texttt{Conv1} layer is 57.93\% smaller than those training \texttt{FC2}. Moreover, there is a significant disparity in communication costs among clients. The ratios of communication costs are 10815 for CIFAR10, 1522.91 for CIFAR100, 13749.46 for FashionMNIST, and 30.23 for EMNIST-L. 

\begin{figure}[!tb]
     \centering
     \begin{subfigure}[b]{0.48\textwidth}
         \centering
         \includegraphics[width=1.0\textwidth, trim=0 0 0 0, clip]{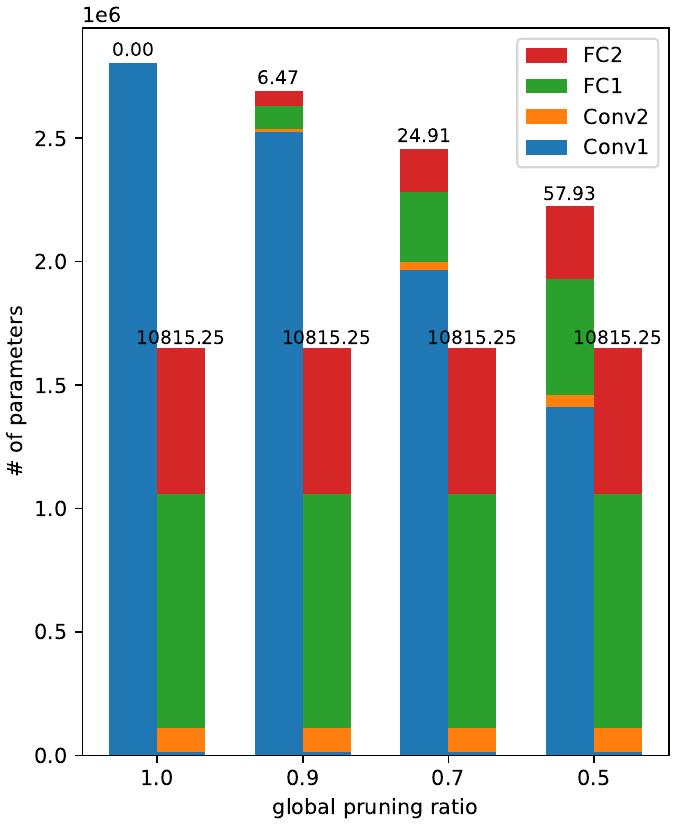}
         \caption{CIFAR10}
     \end{subfigure}
     \hfill
     \begin{subfigure}[b]{0.48\textwidth}
         \centering
         \includegraphics[width=1.0\textwidth, trim=0 0 0 0, clip]{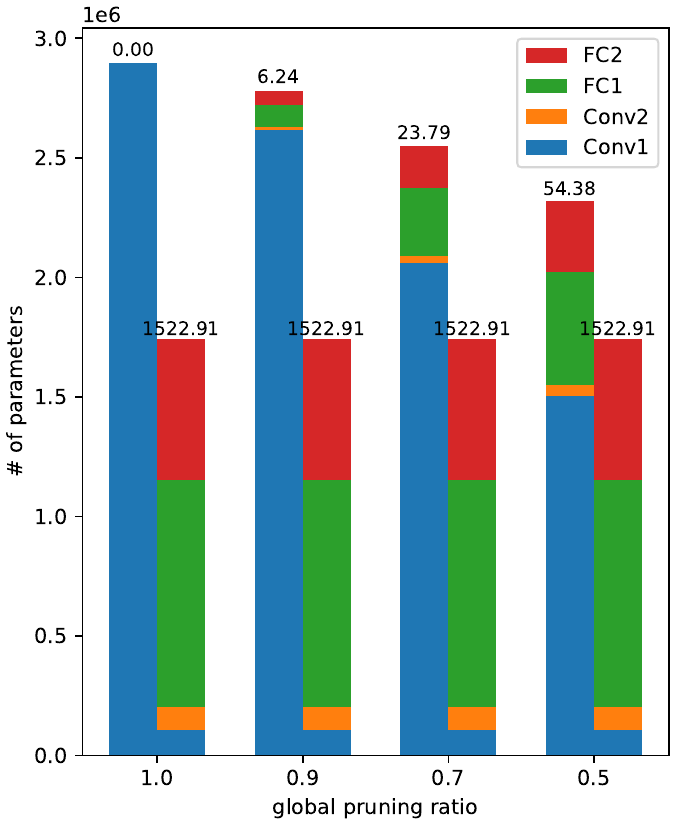}
         \caption{CIFAR100}
     \end{subfigure}
     \hfill
          \centering
     \begin{subfigure}[b]{0.48\textwidth}
         \centering
         \includegraphics[width=1.0\textwidth, trim=0 0 0 0, clip]{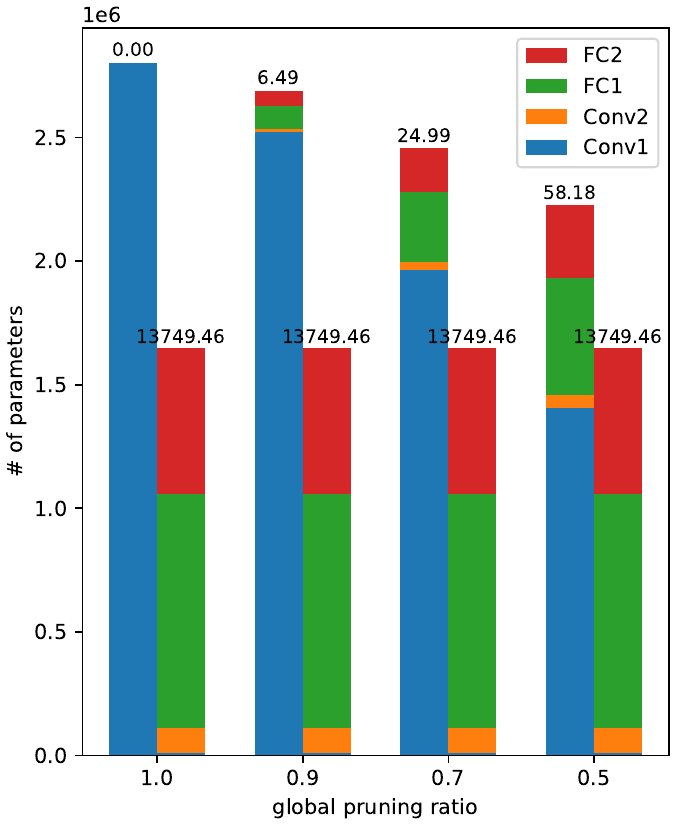}
         \caption{FashionMNIST}
     \end{subfigure}
     \hfill
     \begin{subfigure}[b]{0.48\textwidth}
         \centering
         \includegraphics[width=1.0\textwidth, trim=0 0 0 0, clip]{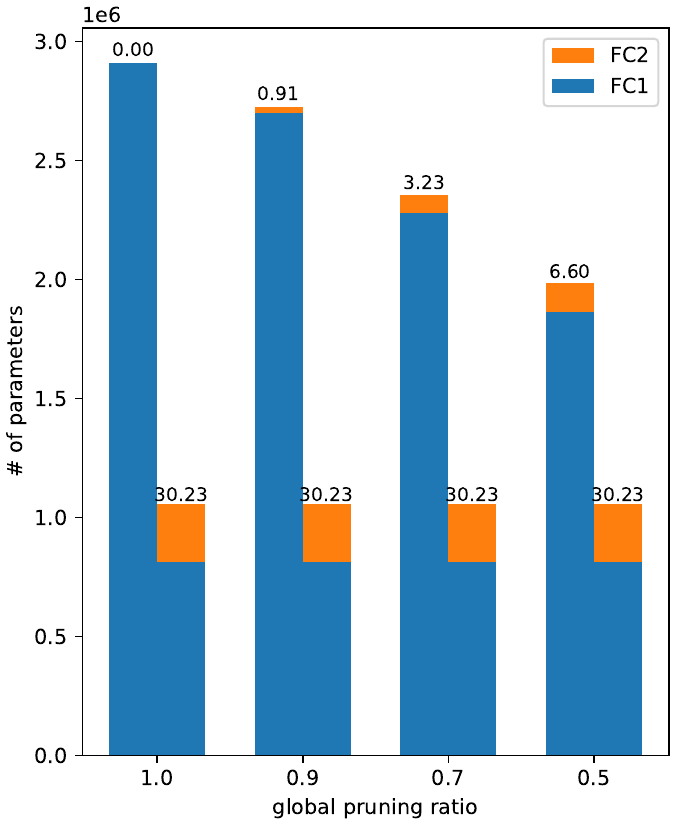}  
         \caption{EMNIST-L}
         \label{fig:parameter_comp_emnistl}
     \end{subfigure}
     \caption{
      The number of parameters across multiple layers, varying according to different global pruning ratios, spans across four distinct datasets.
     For each global pruning ratio, the left side of the bar graph shows the total number of parameters in the model after server-to-client pruning when deployed locally. Conversely, the right side details the communication cost associated with each scenario. 
     Atop each bar, we indicate the relative ratio between the layers with the largest and smallest number of parameters, \textit{i.e.,} $\mathrm{value} = \nicefrac{\mathrm{(largest - smallest)}} {\mathrm{smallest}}$. 
     For (d), since the size of parameters of FC2 and FC3 are the same, we omit plotting FC3 to avoid overlapping.} \label{fig:parameter_comp}
\end{figure}

\newpage
\clearpage  
 
\section{Extended Theoretical Analysis}
\subsection{Analysis of the General FedP3 Theoretical Framework}
\begin{algorithm*}[!tb]
\begin{algorithmic}[1] 
\caption{\algname{FedP3} theoretical framework} \label{alg:IST}
    \STATE \textbf{Parameters:} learning rate $\gamma>0$, number of iterations $K$, sequence of global pruning sketches $\left(\mP_1^k, \ldots, \mP_n^k\right)_{k\leq K}$, aggregation sketches $\left(\mS_1^k, \ldots, \mS_n^k\right)_{k\leq K}$; initial model $w^0 \in \mbR^d$  
    \FOR{$k = 0, 1, \cdots, K$}
        \STATE Conduct global pruning $\mP_i^k w^k$ for $i \in [n]$ and broadcast to all computing nodes
        \FOR{$i = 1, \ldots, n$ in parallel}
            \STATE Compute local (stochastic) gradient w.r.t. personalized model: $\mP_i^k \nabla f_i(\mP_i^k w^k)$
            \STATE Take (maybe multiple) gradient descent step $u_i^k = \mP_i^k w^k - \gamma \mP_i^k \nabla f_i(\mP_i^k w^k)$
            \STATE Send $v_i^k = \mS_i^k u_i^k$ to the server 
        \ENDFOR
        \STATE Aggregate received subset of layers: $w^{k+1} = \frac{1}{n} \sum_{i=1}^n v_i^k$
    \ENDFOR
\end{algorithmic}
\end{algorithm*}

We introduce the theoretical foundation of \algname{FedP3}, detailed in Algorithm~\ref{alg:IST}. Line 3 demonstrates the global pruning process, employing a biased sketch over randomized sketches $P_i$ for each client $i \in [n]$, as defined in Definition~\ref{def:sketch1}. The procedure from Lines 4 to 8 details the local training methods, though we exclude further local pruning for brevity. Notably, our framework could potentially integrate various local pruning techniques, an aspect that merits future exploration.

Our approach uniquely compresses both the weights $w^k$ and their gradients $\nabla f_i(\mP^k_i w^k)$. For the sake of clarity, we assume in Line 5 that each client $i$ calculates the pruned full gradient $\mP_i^k \nabla f_i(\mP_i^k w^k)$, a concept that could be expanded to encompass stochastic gradient computations.

In alignment with Line 6, our subsequent theoretical analysis presumes that each client performs a single-step gradient descent. This assumption stems from observations that local steps have not demonstrated theoretical efficiency gains in heterogeneous environments until very recent studies, such as \cite{ProxSkip} and its extensions like \cite{ProxSkip-VR, Scafflix}, which required extra control variables not always viable in settings with limited resources.

Diverging from the method in \cite{shulgin2023towards}, our model involves explicitly sending a selected subset of layers $v_i^k$ from each client $i$ to the server. The aggregation of these layer subsets is meticulously described in Line 9.

Our expanded theoretical analysis is structured as follows: Section~\ref{sec:theory_model_aggregation} focuses on analyzing the convergence rate of our innovative model aggregation method. In Section~\ref{sec:theory_dp_aggregation}, we introduce \algname{LDP-FedP3}, a novel differential-private variant of \algname{FedP3}, and discuss its communication complexity in a local differential privacy setting. Section~\ref{sec:theory_global_pruning} then delves into the analysis of global pruning, as detailed in Algorithm~\ref{alg:IST}.
  
\subsection{Model Aggregation Analysis}\label{sec:theory_model_aggregation}
In this section, our objective is to examine the potential advantages of model aggregation and to present the convergence analysis of our proposed \algname{FedP3}. Our subsequent analysis adheres to the standard nonconvex optimization framework, with the goal of identifying an $\epsilon$-stationary point where:
\begin{align}\label{eqn:epsilon_stationary_point}
\ec{\sqN{\nabla f(w)}} \leq \epsilon,
\end{align}

Here, $\ec{\cdot}$ represents the expectation over the inherent randomness in $w\in \Rd$.
Moving forward, our analysis will focus primarily on the convergence rate of our innovative model aggregation strategy. To begin, we establish the smoothness assumption for each local client's model.

\begin{assumption}[Smoothness]\label{asm:smoothness}
    There exists some $L_i\geq 0$, such that for all $i\in [n]$, the function $f_i$ is $L_i$-smooth, i.e.,
    $$
    \norm{\nabla f_i(x) - \nabla f_i(y)} \leq L_i\norm{x - y}, \qquad \forall x, y \in \Rd.
    $$
\end{assumption}

This smoothness assumption is very standard for the convergence analysis~\citep{nesterov2003introductory, ghadimi2013stochastic, ProxSkip, ProxSkip-VR, li2022simple, Scafflix}. The smoothness of function $f$ is $\bar{L} =\avein L_i$, we denote $L_{\max}\eqdef \max_{i\in n} L_i$.

We demonstrate the convergence of our proposed \algname{FedP3}, with a detailed proof presented in Section~\ref{sec:proof_model_aggregation}. Here, we restate Theorem~\ref{thm:model_aggregation} for clarity:
\modelaggregationtheorem*

Next, we interpret the results. Utilizing the inequality $1+w \leq \exp(w)$ and assuming $\gamma \leq \frac{1}{\sqrt{\bar{L}L_{\max}K}}$, we derive the following:
\begin{align*}
(1+\bar{L}L_{\max} \gamma^2)^K \leq \exp(\bar{L}L_{\max} \gamma^2 K) \leq \exp(1) \leq 3.
\end{align*}

Incorporating this into the equation from Theorem~\ref{thm:model_aggregation}, we ascertain:
\begin{align*}
\min_{0\leq k\leq K-1} \ec{\sqN{\nabla f(w^k)}} \leq \frac{6}{\gamma K}\Delta_0.
\end{align*}

To ensure the right-hand side of the above equation is less than $\epsilon$, the condition becomes:
\begin{align*}
\frac{6\Delta_0}{\gamma K} \leq \epsilon \Rightarrow K\geq \frac{6\Delta_0}{\gamma \epsilon}.
\end{align*}

Given $\gamma\leq \frac{1}{\sqrt{\bar{L}L_{\max}K}}$, it follows that $K\geq \frac{36(\Delta_0)^2}{\bar{L}L_{\max}\epsilon^2} = \mathcal{O}\left(\frac{1}{\epsilon^2}\right)$.

Considering the communication cost per iteration is $n\times v = n\times \frac{d}{n} = d$, the total communication cost is:
\begin{align*}
C_{\mathrm{FedP3}} = \mathcal{O}\left(\frac{d}{\epsilon^2} \right).
\end{align*}

We compare this performance with an algorithm lacking our specific model aggregation design, namely Distributed Gradient Descent (\algname{DGD}). When \algname{DGD} satisfies Assumption~\ref{asm:abc_assumption} with $A=C=0, B=1$ as per Theorem~\ref{thm_abc_convergence}, the total iteration complexity to achieve an $\epsilon$-stationary point is $\mathcal{O}\left(\frac{1}{\epsilon}\right)$. Given that the communication cost per iteration is $nd$, the total communication cost for \algname{DGD} is:
\begin{align*}
C_{\mathrm{DGD}} = \mathcal{O}\left( \frac{nd}{\epsilon}\right).
\end{align*}

We observe that the communication cost of \algname{FedP3} is more efficient than \algname{DGD} by a factor of $\mathcal{O}(n/\epsilon)$. This is particularly advantageous in practical Federated Learning (FL) scenarios, where a large number of clients are distributed, highlighting the suitability of our method for such environments. This efficiency also opens avenues for further exploration in large language models.

Although we have demonstrated provable advantages in communication costs for large client numbers, we anticipate that our method's performance exceeds our current theoretical predictions. This expectation is based on the comparison of \algname{FedP3} and \algname{DGD} under Lemma~\ref{lem:l_smooth_bound}. For \algname{DGD}, with parameters $A=\bar{L}, B=C=0$, the iteration complexity aligns with $\mathcal{O}(\frac{1}{\epsilon^2})$, leading to a communication cost of:

\begin{align*}
C_{\mathrm{DGD}}^{\prime} = \mathcal{O}\left(\frac{nd}{\epsilon^2}\right).
\end{align*}

This indicates a significant reduction in communication costs by a factor of $n$ without additional requirements. It implies that if we could establish a tighter bound on $\sqN{\nabla f_i (w)}$, beyond the scope of Lemma~\ref{lem:l_smooth_bound}, our theoretical results could be further enhanced.

\subsection{Differential-Private FedP3 Analysis}\label{sec:theory_dp_aggregation}
The integration of gradient pruning as a privacy preservation method was first brought to prominence by \cite{zhu2019deep}. Further studies, such as \cite{huang2020privacy}, have delved into the effectiveness of DNN pruning in protecting privacy.

In our setting, we ensure that our training process focuses on extracting partial features without relying on all layers to memorize local training data. This is achieved by transmitting only a select subset of layers from the client to the server in each iteration. By transmitting fewer layers—effectively implementing greater pruning from clients to the server—we enhance the privacy-friendliness of our framework.

This section aims to provide a theoretical exploration of the "privacy-friendly" aspect of our work. Specifically, we introduce a differential-private version of our method, \algname{LDP-FedP3}, and discuss its privacy guarantees, utility, and communication cost, supported by substantial evidence and rigorous proof.

Local differential privacy is crucial in our context. We aim not only to train machine learning models with reduced communication bits but also to preserve each client's local privacy, an essential element in FL applications. Following the principles of local differential privacy (LDP) as outlined in works like \cite{andres2013geo, chatzikokolakis2013broadening, zhao2020local, li2022soteriafl}, we define two datasets ${D}$ and ${D}^\prime$ as neighbors if they differ by just one entry. We provide the following definition for LDP:

\begin{algorithm*}[!tb]
\begin{algorithmic}[1] 
\caption{Differential-Private FedP3 (\algname{LDP-FedP3})} \label{alg:DP_FedP3}
    \STATE \textbf{Parameters:} learning rate $\gamma>0$, number of iterations $K$, sequence of aggregation sketches $\left(\mS_1^k, \ldots, \mS_n^k\right)_{k\leq K}$, perturbation variance $\sigma^2$, minibatch size $b$
    \FOR{$k = 0, 1, 2 \ldots$}
        \STATE Server broadcasts $w^k$ to all clients
        \FOR{each client $i = 1, \ldots, n$ in parallel}
            \STATE Sample a random minibatch $\mathcal{I}_b$ with size $b$ from lcoal dataset $D_i$
            \STATE Compute local stochastic gradient ${g}_i^k = \frac{1}{b}\sum_{j\in \gI_b} \nabla f_{i, j} (w^k)$
            \STATE Take (maybe multiple) gradient descent step $u_i^k = w^k - \gamma {g}_i^k$
            \STATE Gaussian perturbation to achieve LDP: $\tilde{u}^k_i = u_i^k + \zeta_i^k$, where $\zeta_i^k \sim \gN(\mathbf{0}, \sigma^2\vI)$
            \STATE Send $v_i^k = \mS_i^k \tilde{u}_i^k$ to the server 
        \ENDFOR
        \STATE Server aggregates received subset of layers: $w^{k+1} = \frac{1}{n} \sum_{i=1}^n v_i^k$
    \ENDFOR
\end{algorithmic}
\end{algorithm*}

\begin{definition}\label{def:ldp}
    A randomized algorithm $\mathcal{A}: \mathcal{D}\to \mathcal{F}$, where $\mathcal{D}$ is the dataset domain and $\mathcal{F}$ the domain of possible outcomes, is $(\epsilon, \delta)$-locally differentially private for client $i$ if, for all neighboring datasets ${D}_i, {D}_i^\prime\in \mathcal{D}$ on client $i$ and for all events $\mathcal{S}\in \mathcal{F}$ within the range of $\mathcal{A}$, it holds that:
    \begin{align*}
    \mathrm{Pr}{\mathcal{A}(D_i)\in \mathcal{S}} \leq e^{\epsilon} \mathrm{Pr}{\mathcal{A}(D^\prime_i) \in \mathcal{S}} + \delta.
    \end{align*}
\end{definition}

This LDP definition (Definition \ref{def:ldp}) closely resembles the original concept of $(\epsilon, \delta)$-DP \citep{dwork2014algorithmic, dwork2006calibrating}, but in the FL context, it emphasizes each client's responsibility to safeguard its privacy. This is done by locally encoding and processing sensitive data, followed by transmitting the encoded information to the server, without any coordination or information sharing among clients.

Similar to our previous analysis of \algname{FedP3}, we base our discussion here on the smoothness assumption outlined in Assumption~\ref{asm:smoothness}. For simplicity, and because our primary focus in this section is on privacy concerns, we assume uniform smoothness across all clients, i.e., $L_i \equiv L$.

Our analysis also relies on the bounded gradient assumption, which is a common consideration in differential privacy analyses:

\begin{assumption}[Bounded gradient]\label{asm:bounded_gradient}
There exists some constant $C\geq 0$, such that for all clients $i \in [n]$ and for any $x\in \Rd$, the gradient norm satisfies $\norm{\nabla f_i(x)} \leq C$.
\end{assumption}

This bounded gradient assumption aligns with standard practices in differential privacy analysis, as evidenced in works such as \citep{bassily2014private, wang2017differentially, iyengar2019towards, feldman2020private, li2022soteriafl}.

We introduce a locally differentially private version of \algname{FedP3}, termed \algname{LDP-FedP3}, with detailed algorithmic steps provided in Algorithm~\ref{alg:DP_FedP3}. This variant differs from \algname{FedP3} in Algorithm~\ref{alg:IST} primarily by incorporating the Gaussian mechanism, as per \cite{abadi2016deep}, to ensure local differential privacy (as implemented in Line 8 of Algorithm~\ref{alg:DP_FedP3}). Another distinction is the allowance for minibatch sampling per client in \algname{LDP-FedP3}. Given that our primary focus in this section is on privacy, we set aside the global pruning aspect for now, considering it orthogonal to our current analysis and not central on our privacy considerations. In Theorem~\ref{thm:convergence_dp_fedp3}, we encapsulate the following theorem:
\dpfedpconvergencetheorem*


\begin{table}[!t]
    \centering
    \caption{Comparison of communication complexity in LDP Algorithms for nonconvex problems across distributed settings with $n$ nodes.}\label{tab:results}
    \renewcommand{\arraystretch}{2}
    \begin{tabular}{c|c|c}
    \hline
     \bf Algorithm  & \bf Privacy & \bf Communication Complexity \\ \hline
     Q-DPSGD-1 {\small \citep{ding2021differentially}} & $(\epsilon,\delta)$-LDP & $\frac{(1 +{n}/(m{\tilde{\sigma}^2})) m^2\epsilon^2}{d \log(1/\delta)}$\\ \hline
     LDP SVRG/SPIDER {\small \citep{lowy2023private}} & $(\epsilon, \delta)$-LDP & $\frac{n^{3/2}m\epsilon\sqrt{d}}{\sqrt{\log(1/\delta)}}$\\ \hline
     SDM-DSGD {\small \citep{zhang2020private}} & $(\epsilon, \delta)$-LDP & $\frac{{n^{7/2}} m\epsilon\sqrt{d}}{{(1+\omega)^{3/2}}\sqrt{\log(1/\delta)}} + \frac{{n} m^2\epsilon^2}{{(1+\omega)}\log(1/\delta)}$\\ \hline
     CDP-SGD {\small \citep{li2022soteriafl}} & $(\epsilon, \delta)$-LDP & $\frac{{n^{3/2}} m\epsilon\sqrt{d}}{{(1+\omega)^{3/2}}\sqrt{\log(1/\delta)}} + \frac{{n} m^2\epsilon^2}{{(1+\omega)}\log(1/\delta)}$\\ \hline
     \cellcolor{bgcolor} LDP-FedP3 (Ours) & \cellcolor{bgcolor}$(\epsilon, \delta)$-LDP & \cellcolor{bgcolor} $\frac{ m\epsilon\sqrt{d}}{\sqrt{\log(1/\delta)}} + \frac{m^2\epsilon^2}{\log(1/\delta)}$\\ \hline
    \end{tabular}
\end{table}

In Section~\ref{sec:proof_dp_fedp3_convergence_analysis}, we provide the proof for our analysis. This section primarily focuses on analyzing and comparing our results with existing literature. Our proof pertains to local differentially-private Stochastic Gradient Descent (SGD). We note that \cite{li2022soteriafl} offered a proof for \algname{CDP-SGD} using a specific set of compressors. However, our chosen compressor does not fall into that category, as discussed more comprehensively in \cite{szlendak2021permutation}. Considering the Rand-t compressor with $t=d/n$, it's established that:
\begin{align*}
\ec{\sqN{\mathcal{R}_t(w) - w}} \leq \omega \sqN{w}, \quad \text{where} \quad \omega = \frac{d}{t} - 1 = n - 1.\nonumber
\end{align*}

Setting the same $K$ and $\gamma$ and applying Theorem 1 from \cite{li2022soteriafl}, we obtain:
\begin{align*}
\frac{1}{K}\sum_{k=0}^{K-1} \ec{\sqn{\nabla f(w^t)}} \leq \frac{5C\sqrt{Lcd\log(1/\sigma)}}{m\epsilon} = \mathcal{O}\left(\frac{C\sqrt{Ld\log(1/\delta)}}{m\epsilon} \right),
\end{align*}

which aligns with our theoretical analysis. Interestingly, we observe that our bound is tighter by a factor of $2/5$, indicating a more efficient performance in our approach.

We also compare our proposed \algname{LDP-FedP3} with other existing algorithms in Algorithm~\ref{tab:results}. An intriguing finding is that our method's efficiency does not linearly increase with a higher number of clients, denoted as $n$. Notably, our communication complexity remains independent of $n$. This implies that in practical scenarios with a large $n$, our communication costs will not escalate. We then focus on methods with a similar structure, namely, \algname{SDM-DSGD} and \algname{CDP-SGD}. For these, the communication cost comprises two components. Considering a specific case, \texttt{Rand-t}, where $t$ is deliberately set to $d/n$, we derive $\omega = \nicefrac{d}{t} - 1 = n - 1$. This results in a communication complexity on par with \algname{CDP-SGD}, but significantly more efficient than \algname{SDM-DSGD}. Moreover, it's important to note that the compressor in \algname{LDP-FedP3} differs from that in \algname{CDP-SGD}. Our analysis introduces new perspectives and achieves comparable communication complexity to other well-established results.


\subsection{Global Pruning Analysis}\label{sec:theory_global_pruning}
Our methodology relates to independent subnetwork training (IST) but introduces distinctive features such as personalization and explicit layer-level sampling for aggregation. IST, although conceptually simple, remains underexplored with only limited studies like \cite{liao2022convergence}, which provides theoretical insights for overparameterized single hidden layer neural networks with ReLU activations, and \cite{shulgin2023towards}, which revisits IST from the perspective of sketch-type compression. In this section, we delve into the nuances of global pruning as applied in Algorithm~\ref{alg:IST}.

For our analysis here, centered on global pruning, we simplify by assuming that all personalized model aggregation sketches $\mS_i$ are identical matrices, that is, $\mS_i = \vI$. This simplification, however, does not trivialize the analysis as the pruning of both gradients and weights complicates the convergence analysis. Additionally, we adhere to the design of the global pruning sketch $\mP$ as per Definition~\ref{def:sketch1}, which results in a biased estimation, i.e., $\mathbb{E}[\mP_i w]\neq w$. Unbiased estimators, such as \texttt{Rand-t} that operates over coordinates, are more commonly studied and offer several advantages in theoretical analysis.

For \texttt{Rand-t}, consider a random subset $\mathcal{S}$ of $[d]$ representing a proper sampling with probability $c_j\eqdef \mathrm{Prob}(j\in \mathcal{S}) > 0$ for every $j\in [d]$. $\mathcal{R}_t \eqdef \Diag(r_s^1, r_s^2, \cdots, r_s^d)$, where $r_s^j = \nicefrac{1}{c_j}$ if $j\in \mathcal{S}$ and $0$ otherwise. In contrast to our case, the value on each selected coordinate in \texttt{Rand-t} is scaled by the probability $p_i$, equivalent to $|\mathcal{S}|/d$. However, the implications of using a biased estimator like ours are not as well understood.

Our theoretical focus is on Federated Learning (FL) in the context of empirical risk minimization, formulated in (\ref{eqn:objective1}) within quadratic problem frameworks. This setting involves symmetric matrices $\mL_i$, as defined in the following equation:

\begin{equation}\label{eqn:quadratic_objective}
\begin{aligned}
f(w) = \avein f_i(w), \quad \text{where} \quad f_i(w) \equiv \frac{1}{2} w^\top \mL_i w - w^\top b_i.
\end{aligned}
\end{equation}

While Equation~\ref{eqn:quadratic_objective} simplifies the loss function, the quadratic problem paradigm is extensively used in neural network analysis \citep{zhang2019algorithmic, zhu2022quadratic, shulgin2023towards}. Its inherent complexity provides valuable insights into complex optimization algorithms \citep{arjevani2020tight, cunha2022only, goujaud2022super}, thereby serving as a robust model for both theoretical examination and practical applications. In this framework, $f(x)$ is $\moL$-smooth, and $\nabla f(x) = \moL x - \ob$, where $\moL = \avein \mL_i$, and $\ob \eqdef \avein b_i$.

At this juncture, we introduce a fundamental assumption commonly applied in the theoretical analysis of coordinate descent-type methods.

\begin{assumption}[Matrix Smoothness]\label{ass:matrix-smoothness}
Consider a differentiable function \( f: \mathbb{R}^d \rightarrow \mathbb{R} \). We say that \( f \) is \( \mathbf{L} \)-smooth if there exists a positive semi-definite matrix \( \mathbf{L} \in \mathbb{R}^{d \times d} \) satisfying the following condition for all \( x, h \in \mathbb{R}^d \):
\begin{equation}\label{eq:L-matrix-smooth}
f(x + h) \leq f(x) + \langle \nabla f(x), h \rangle + \frac{1}{2} \langle \mathbf{L}h, h \rangle.
\end{equation}
\end{assumption}

The classical \( L \)-smoothness condition, where \( \mathbf{L} = L \cdot \mathbf{I} \), is a particular case of Equation~\eqref{eq:L-matrix-smooth}. The concept of matrix smoothness has been pivotal in the development of gradient sparsification methods, particularly in scenarios optimizing under communication constraints, as shown in \cite{safaryan2021smoothness, wang2022theoretically}. We then present our main theory under the interpolation regime for a quadratic problem (\ref{eqn:quadratic_objective}) with \( b_i \equiv 0 \), as detailed in Theorem~\ref{thm:global_pruning}. 

We first provide the theoretical analysis of biased global pruning as implemented in Algorithm~\ref{alg:DP_FedP3}. To the best of our knowledge, biased gradient estimators have rarely been explored in theoretical analysis. However, our approach of intrinsic submodel training or global pruning is inherently biased. \cite{shulgin2023towards} proposed using the Perm-K~\citep{szlendak2021permutation} as the global pruning sketch. Unlike their approach, which assumes a pruning connection among clients, our method considers the biased Rand-K compressor over coordinates. 

\begin{theorem}[Global pruning]\label{thm:global_pruning}
    In the interpolation regime for a quadratic problem~(\ref{eqn:quadratic_objective}) with \( \moL \succ 0 \) and \( b_i \equiv 0 \), let \( \moL^k \eqdef \avein \mP_i^k \moL \mP_i^k \). Assume that \( \moW \eqdef \frac{1}{2}\mathbb{E}[\mP^k \moL\moB^k + \mP^k\moB^k \moL] \succeq 0 \) and there exists a constant \( \theta >0 \) such that \( \mathbb{E}[\moB^k \moL\moB^k]\preceq \theta \moW \). Also, assume \( f(\mP^kw^k)\leq (1+\gamma^2 h) f(w^k) - f^{\inf} \) for some \( h>0 \). Fixing the number of iterations \( K \) and choosing the step size \( \gamma \in \min\left\{\sqrt{\frac{\log2}{hK}}, \frac{1}{\theta} \right\} \), the iterates satisfy:
    $$
    \mathbb{E}\left[\|\nabla f(w^k)\|^2_{\moL^{-1}\moW\moL^{-1}}\right] \leq  \frac{4\Delta_0}{\gamma K},
    $$
    where $\Delta_0 = f(w^0) - f^{\inf}$.
\end{theorem}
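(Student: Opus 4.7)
The plan is to adapt the standard matrix-smoothness descent argument to the biased global-pruning setting, exploiting the fact that for the quadratic $f(w) = \tfrac12 w^\top \moL w$ (with $b_i\equiv 0$), the matrix-smoothness inequality~\eqref{eq:L-matrix-smooth} holds with equality. Writing the update as $w^{k+1} = \moB^k w^k - \gamma \bar{\mC}^k w^k$, where $\moB^k = \avein \mP_i^k$ aggregates the pruned iterates and $\bar{\mC}^k = \avein \mP_i^k \mL_i \mP_i^k$ aggregates the pruned Hessian-vector products, I would expand $f(w^{k+1})$ exactly as a quadratic form in $w^k$ and then take conditional expectation over the sketch at step $k$.

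The two key terms in that expansion are the linear-in-$\gamma$ cross term pairing $\mP^k w^k$ with the aggregated pruned gradient $\bar{\mC}^k w^k$, and the quadratic-in-$\gamma$ term $\bar{\mC}^k \moL \bar{\mC}^k$ applied to $w^k$. By the symmetrization encoded in the definition of $\moW$, the cross term in expectation is exactly $-\gamma \langle \moW w^k, w^k\rangle$, while the hypothesis $\ec{\moB^k \moL \moB^k} \preceq \theta \moW$ bounds the quadratic term above by $\gamma^2 \theta \langle \moW w^k, w^k\rangle$. Choosing $\gamma \leq 1/\theta$ then lets me absorb half of the quadratic term into the cross term, producing a clean $-\tfrac{\gamma}{2}\langle \moW w^k, w^k\rangle$ descent. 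The final translation from $w^k$-coordinates to $\nabla f(w^k)$-coordinates uses $w^k = \moL^{-1} \nabla f(w^k)$ (valid because $\moL \succ 0$), which rewrites $\langle \moW w^k, w^k\rangle$ as $\|\nabla f(w^k)\|^2_{\moL^{-1} \moW \moL^{-1}}$, exactly the weighted norm appearing in the conclusion. The bias hypothesis $f(\mP^k w^k) - f^{\inf} \leq (1 + \gamma^2 h)(f(w^k) - f^{\inf})$ is what controls the shift caused by applying the biased sketch $\mP^k$ to $w^k$ before the gradient step and enters as a multiplicative factor in front of $f(w^k) - f^{\inf}$ in the one-step inequality.

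With the one-step recursion $\ec{f(w^{k+1}) - f^{\inf}} \leq (1 + \gamma^2 h)(f(w^k) - f^{\inf}) - \tfrac{\gamma}{2}\ec{\|\nabla f(w^k)\|^2_{\moL^{-1}\moW\moL^{-1}}}$ in hand, I would telescope from $k = 0$ to $K-1$, isolate the average of the gradient-norm terms, and control the compounded multiplicative inflation via $(1 + \gamma^2 h)^K \leq \exp(\gamma^2 h K) \leq 2$ under the step-size condition $\gamma \leq \sqrt{\log 2/(hK)}$. Combining the two step-size restrictions $\gamma \leq \min\{\sqrt{\log 2/(hK)},\, 1/\theta\}$ then yields the claimed $4\Delta_0/(\gamma K)$ bound after a division by $K$.

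The main obstacle I anticipate is the bookkeeping around the biased sketch $\mP^k$: unlike the unbiased compressors analyzed in \cite{szlendak2021permutation,shulgin2023towards}, here $\ec{\mP^k w^k} \neq w^k$, so the cross term does not reduce cleanly to $\langle \nabla f(w^k), w^k\rangle$, and one must verify that the specific symmetrized operator $\moW$ in the statement is exactly the one that emerges from the expectation after the various bias contributions combine. Establishing or even motivating the auxiliary bias-inflation hypothesis $f(\mP^k w^k) \leq (1+\gamma^2 h)(f(w^k) - f^{\inf})$ for natural choices of $\mP^k$ (such as the coordinate-wise Bernoulli sampling of Definition~\ref{def:sketch1}) is a second, essentially decoupled technical task, but is what makes the telescoping argument above produce only a constant factor blow-up rather than an exponential one.
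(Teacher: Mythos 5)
Your proposal is correct and follows essentially the same route as the paper's proof: the same decomposition of the update into $\mP^k w^k - \gamma \moB^k w^k$, the same use of matrix smoothness of the quadratic, the same symmetrization producing $\moW$, the same absorption of the quadratic term via $\mathbb{E}[\moB^k\moL\moB^k]\preceq\theta\moW$ and $\gamma\leq 1/\theta$, the same change of variables $w^k=\moL^{-1}\nabla f(w^k)$, and the same control of the $(1+\gamma^2h)^K\leq 2$ inflation. The only cosmetic difference is that the paper packages the final telescoping as a weighted-recursion lemma (bounding a gradient norm at a randomly weighted iterate) whereas you telescope directly and bound the average, which is equivalent.
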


By employing the definition of $\gamma$, we demonstrate that the iteration complexity is $\mathcal{O}(1/\epsilon^2)$.
Compared with the analysis in~\cite{shulgin2023towards}, we allow personalization and do not constrain the global pruning per client to be dependent on other clients. Global pruning is essentially a biased estimator over the global model weights, a concept not widely understood. Our theorem provides insightful perspectives on the convergence of global pruning.

Our theory could also extend to the general case by applying the rescaling trick from Section 3.2 in \cite{shulgin2023towards}. This conversion of the biased estimator to an unbiased one leads to a general convergence theory. However, this is impractical for realistic global pruning analysis, as it involves pruning the global model without altering each weight's scale. Given that IST and biased gradient estimators are relatively new in theoretical analysis, we hope our analysis could provide some insights.


  




\section{Missing Proofs}
\subsection{Proof of Theorem \ref{thm:model_aggregation}}\label{sec:proof_model_aggregation}
Building on the smoothness assumption of $L_i$ outlined in Assumption~\ref{asm:smoothness}, the following lemma is established:

\begin{lemma}\label{lem:l_smooth_bound}
    Given that a function $f_i$ satisfies Assumption~\ref{asm:smoothness} for each $i\in [n]$, then for any $w\in \Rd$, it holds that 
    \begin{align}\label{eqn:l_smooth_bound}
        \sqN{\nabla f_i(w)} \leq 2L_i (f_i(w) - f^{\inf}).
    \end{align}
\end{lemma}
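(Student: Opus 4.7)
The statement is the classical ``gradient is small when near the infimum'' inequality for smooth functions, so the plan is to invoke the standard descent lemma argument from $L$-smoothness. First, I would recall that Assumption~\ref{asm:smoothness} (Lipschitz-continuous gradient with constant $L_i$) implies the quadratic upper bound
\begin{equation*}
f_i(y) \leq f_i(w) + \langle \nabla f_i(w), y - w\rangle + \tfrac{L_i}{2}\|y - w\|^2 \qquad \forall\, w,y \in \mathbb{R}^d,
\end{equation*}
which follows by integrating the Lipschitz-gradient condition along the segment from $w$ to $y$.

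The second step is to minimize the right-hand side of that bound over $y$. Because the upper bound is a strictly convex quadratic in $y$, its unique minimizer is the explicit one-step gradient descent point $y^\star = w - \tfrac{1}{L_i}\nabla f_i(w)$. Plugging $y^\star$ back in yields
\begin{equation*}
f_i(y^\star) \leq f_i(w) - \tfrac{1}{2L_i}\|\nabla f_i(w)\|^2 .
\end{equation*}

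The third step is to use the global lower bound $f_i(y^\star) \geq f^{\inf}$ (which requires the implicit assumption that $f_i$ is bounded below by $f^{\inf}$, presumably defined earlier in the paper as $\inf_w f_i(w)$ or as a uniform lower bound across the $f_i$). Rearranging the previous display then gives precisely
\begin{equation*}
\|\nabla f_i(w)\|^2 \leq 2L_i\bigl(f_i(w) - f^{\inf}\bigr),
\end{equation*}
which is the claim.

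There is no genuine obstacle: the argument is a two-line textbook computation, and the only subtlety to flag is making sure $f^{\inf}$ is indeed a lower bound for each $f_i$ (as opposed to, say, the infimum of $f$ only), so that the inequality $f_i(y^\star) \geq f^{\inf}$ is valid. If the paper's $f^{\inf}$ is defined as $\inf_w f(w)$ rather than $\inf_w f_i(w)$, the statement still holds after replacing $f^{\inf}$ by $f_i^{\inf} := \inf_w f_i(w)$ and noting $f_i^{\inf} \geq f^{\inf}$ only if the $f_i$ share a common lower bound; otherwise the lemma should be read with $f^{\inf}$ denoting $\inf_w f_i(w)$ for each $i$.
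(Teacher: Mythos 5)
Your proof is correct and follows essentially the same route as the paper: both evaluate the smoothness upper bound at the one-step gradient descent point $w - \tfrac{1}{L_i}\nabla f_i(w)$ and then invoke $f^{\inf} \leq f_i(\cdot)$ to rearrange. Your caveat about $f^{\inf}$ is well taken --- the paper's proof likewise implicitly uses $f^{\inf}$ as a lower bound on each individual $f_i$, not merely on the average $f$.
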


\begin{proof}
    Consider $w^\prime = w - \frac{1}{L_i} \nabla f_i(w)$. By applying the $L_i$-smoothness condition of $f$ as per Assumption~\ref{asm:smoothness}, we obtain
    \begin{align*}
        f_i(w^\prime) &\leq f_i(w) + \langle \nabla f_i(w), w^\prime - w \rangle + \frac{L_i}{2}\|\nabla f_i(w)\|^2.
    \end{align*}
    Taking into account that $f^{\inf} \leq f_i(w^\prime)$, it follows that
    \begin{align*}
        f^{\inf} &\leq f_i(w^\prime) \\
        &\leq f_i(w) - \frac{1}{L_i}\|\nabla f_i(w)\|^2 + \frac{1}{2L_i}\|\nabla f_i(w)\|^2 \\
        &= f_i(w) - \frac{1}{2L_i}\|\nabla f_i(w)\|^2.
    \end{align*}
    Rearranging the terms yields the claimed result.
\end{proof}

Since in this section, we are primarily interested in exploring the convergence of our novel model aggregation design, we set $\mP_i^k \equiv \vI$ for all $i\in [n]$ and $k\in [K]$. Our analysis focuses on exploring the characteristics of $\mS$, which leads to the following theorem.

By the definition of model aggregation sketches in Definition~\ref{def:sketch2}, we have $\avein \mS_i = \vI$. Thus, the next iterate can be represented as 
\begin{align}
    w^{k+1} &= \avein \mS_i^k (w^k - \gamma \nabla f_i(w^k))\nonumber\\
    &= \avein \mS_i^k w^k - \gamma \underbrace{\avein \mS_i^k \nabla f_i(w^k)}_{g^k}\label{eqn:next_iterate}\\
    &= w^k - \gamma g^k.\nonumber
\end{align}

Bounding $g^k$ is a crucial part of our analysis. To align with existing works on non-convex optimization, numerous critical assumptions are considered. Extended reading on this can be found in \cite{khaled2020better}. Here, we choose the weakest assumption among all those listed in \cite{khaled2020better}.

\begin{assumption}[ABC Assumption]\label{asm:abc_assumption}
    For the second moment of the stochastic gradient, it holds that 
    \begin{align}\label{eqn:abc_assumption}
        \mathbb{E}\left[\|\mathbf{g}(w)\|^2\right] \leq 2A(f(w) - f^{\inf}) + B\|\nabla f(w)\|^2 + C,
    \end{align}
    for certain constants $A, B, C \geq 0$ and for all $w\in \mathbb{R}^d$.
\end{assumption}

Note that in order to accommodate heterogeneous settings, we assume a localized version of Assumption~\ref{asm:abc_assumption}. Specifically, each $g_i^k \equiv \mS_i^k \nabla f_i(w^k)$ is bounded for some constants $A_i, B_i, C_i \geq 0$ and all $w^k \in \mathbb{R}^d$.

\begin{lemma}\label{lem:individualized_abc}
    The $g^k$ defined in Eqn.~\ref{eqn:next_iterate} satisfies Assumption~\ref{asm:abc_assumption} with $A=L_{\max}$, $B=C = 0$.
\end{lemma}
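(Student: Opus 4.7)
The plan is to compute $\mathbb{E}\!\left[\sqN{g^k}\right]$ directly by exploiting the disjoint-support structure of the permutation-based aggregation sketches $\mS_i^k$, and then to close things out with Lemma~\ref{lem:l_smooth_bound}. Concretely, from Definition~\ref{def:sketch2} each $\mS_i^k$ is $n$ times a diagonal $0/1$ projector onto an $s$-element coordinate set, and the $n$ index sets $\{\pi_{s(i-1)+1},\dots,\pi_{si}\}$ form a partition of $[d]$ determined by the random permutation $\pi$. In particular the coordinate supports of $\mS_1^k,\dots,\mS_n^k$ are pairwise disjoint, so for any vectors $v_1,\dots,v_n \in \Rd$ the vectors $\mS_i^k v_i$ live in mutually orthogonal subspaces of $\Rd$.

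Using this orthogonality on $v_i = \nabla f_i(w^k)$ I get, pointwise in the permutation,
\begin{align*}
\sqN{g^k} \;=\; \sqN{\tfrac{1}{n}\sumin \mS_i^k \nabla f_i(w^k)} \;=\; \tfrac{1}{n^2}\sumin \sqN{\mS_i^k \nabla f_i(w^k)}.
\end{align*}
Since $\mS_i^k$ is $n$ times a coordinate projector onto $s = d/n$ coordinates, $\sqN{\mS_i^k v} = n^2 \sqN{\mathbf{P}_i^k v}$ where $\mathbf{P}_i^k$ is the corresponding $0/1$ projector, so the $n^2$ factors cancel and I am left with $\sumin \sqN{\mathbf{P}_i^k \nabla f_i(w^k)}$. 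Taking expectation over $\pi$, the probability that a fixed coordinate $j$ lies in the $i$-th block equals $s/d = 1/n$, hence $\ec{\sqN{\mathbf{P}_i^k v}} = \tfrac{1}{n}\sqN{v}$ for any deterministic $v$. Applying this with $v = \nabla f_i(w^k)$ yields
\begin{align*}
\ec{\sqN{g^k}} \;=\; \tfrac{1}{n}\sumin \sqN{\nabla f_i(w^k)}.
\end{align*}

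Finally I invoke Lemma~\ref{lem:l_smooth_bound} on each client: $\sqN{\nabla f_i(w^k)} \le 2 L_i\bigl(f_i(w^k) - f^{\inf}\bigr) \le 2 L_{\max}\bigl(f_i(w^k) - f^{\inf}\bigr)$. Averaging over $i\in[n]$ and using $f(w^k) = \avein f_i(w^k)$ produces $\ec{\sqN{g^k}} \le 2L_{\max}\bigl(f(w^k) - f^{\inf}\bigr)$, which is exactly Assumption~\ref{asm:abc_assumption} with $A = L_{\max}$, $B = 0$, $C = 0$.

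The only mildly delicate step is the orthogonality argument: one has to be careful that the statement ``$\mS_i^k v_i$ are orthogonal'' holds \emph{per realization} of $\pi$ (it does, because the supports partition $[d]$), not merely in expectation, so that the cross-terms vanish deterministically before taking $\mathbb{E}$. Everything else is a short calculation using $s/d = 1/n$, the cancellation of the $n^2$ scaling of $\mS_i^k$, and Lemma~\ref{lem:l_smooth_bound}. A minor bookkeeping point is interpreting $f^{\inf}$ consistently between the per-client bound in Lemma~\ref{lem:l_smooth_bound} and the averaged form in Assumption~\ref{asm:abc_assumption}; assuming the standard convention $f^{\inf} \le \inf_w f_i(w)$ for all $i$ (used already to state Lemma~\ref{lem:l_smooth_bound}), the averaging step is immediate.
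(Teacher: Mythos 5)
Your proposal is correct and follows essentially the same route as the paper: the paper's proof invokes its Theorem on the aggregation sketches to get $\mathbb{E}\bigl[\sqN{g^k}\bigr]=\avein\sqN{\nabla f_i(w^k)}$ and then applies Lemma~\ref{lem:l_smooth_bound} with $L_i\le L_{\max}$, and your argument simply inlines the proof of that sketch identity (disjoint supports killing the cross terms deterministically, plus the $1/n$ coordinate-inclusion probability). Your remarks on the per-realization orthogonality and on the convention $f^{\inf}\le\inf_w f_i(w)$ are both consistent with what the paper implicitly uses.
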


\begin{proof}
    The proof is as follows:
    \begin{align}
    \mathbb{E}_k\left[\|g^k\|^2\right] &= \mathbb{E}_k\left[\|\avein S_i\nabla f_i(w^k)\|^2\right] \nonumber \\
    &= \avein \|\nabla f_i(w^k)\|^2 \nonumber \\
    &\leq \avein 2L_i(f_i(w^k) - f^{\inf}) \label{eqn:0020}\\
    &\leq 2L_{\max} (f(w^k) - f^{\inf}),\nonumber
    \end{align}
    where Equation~\ref{eqn:0020} follows from Lemma~\ref{lem:l_smooth_bound}. 
\end{proof}

We also recognize certain characteristics of the unbiasedness and upper bound of model aggregation sketches, as elaborated in Theorem~\ref{thm:unbiased_second_moment_aggregation_sketch}.

\begin{theorem}[Unbiasedness and Upper Bound of Model Aggregation Sketches]\label{thm:unbiased_second_moment_aggregation_sketch}
    For any vector $w \in \mathbb{R}^d$, the model aggregation sketch $\mS_i$, for each $i \in [n]$, is unbiased, meaning $\mathbb{E}[\mS_i w] = w$. Moreover, for any set of vectors $y_1, y_2, \ldots, y_n \in \mathbb{R}^d$, the following inequality is satisfied:
    \begin{align*}
        \mathbb{E}\left[\left\|\avein \mS_i y_i\right\|^2\right] \leq \avein \left\|y_i\right\|^2.
    \end{align*}
\end{theorem}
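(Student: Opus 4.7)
The plan is to exploit the permutation structure of Definition~\ref{def:sketch2} directly. Writing $I_i \eqdef \{s(i-1)+1,\ldots,si\}$ and $S_i \eqdef \{\pi_j : j \in I_i\}$, the sketch is $\mS_i = n \sum_{k \in S_i} e_k e_k^\top$, a diagonal matrix with $n$ on the coordinates in $S_i$ and zero elsewhere. Two structural facts will drive everything: (a) since $\pi$ is a uniformly random permutation of $[d]$, each coordinate $k \in [d]$ lies in $S_i$ with probability $|I_i|/d = s/d = 1/n$; and (b) since the blocks $I_1,\ldots,I_n$ partition $[d]$, the supports $S_1,\ldots,S_n$ are almost-surely disjoint and their union is $[d]$.

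For the unbiasedness claim, I would compute $\mathbb{E}[\mS_i]$ entrywise using fact (a): the diagonal entry at coordinate $k$ equals $n \cdot \Pr(k \in S_i) = n \cdot 1/n = 1$, and off-diagonal entries vanish since $\mS_i$ is diagonal. Hence $\mathbb{E}[\mS_i] = \vI$, and by linearity $\mathbb{E}[\mS_i w] = w$ for any $w \in \Rd$.

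For the second-moment bound, fact (b) is the crucial ingredient. Because the $\mS_i y_i$ have pairwise disjoint supports (realization-wise, not just in expectation), all cross terms in the expansion vanish almost surely, giving
\begin{equation*}
\left\|\tfrac{1}{n}\sum_{i=1}^n \mS_i y_i\right\|^2 \;=\; \tfrac{1}{n^2}\sum_{i=1}^n \|\mS_i y_i\|^2 \;=\; \tfrac{1}{n^2}\sum_{i=1}^n n^2 \sum_{k=1}^d \mathbf{1}[k \in S_i]\,(y_i)_k^2.
\end{equation*}
Taking expectation and applying fact (a) coordinatewise yields
\begin{equation*}
\mathbb{E}\!\left[\left\|\tfrac{1}{n}\sum_{i=1}^n \mS_i y_i\right\|^2\right] \;=\; \sum_{i=1}^n \sum_{k=1}^d \tfrac{1}{n}(y_i)_k^2 \;=\; \tfrac{1}{n}\sum_{i=1}^n \|y_i\|^2,
\end{equation*}
which in fact matches the claimed upper bound with equality.

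The only mildly delicate step is justifying the disjoint-support collapse of the cross terms; once one notices that $S_i \cap S_{i'} = \emptyset$ for $i\neq i'$ holds deterministically (because $I_i \cap I_{i'} = \emptyset$ and $\pi$ is a bijection), the rest is bookkeeping. No concentration or independence-across-clients argument is needed, which is the subtle point: the $\mS_i$'s are highly correlated through $\pi$, but that correlation is precisely what makes the cross terms vanish rather than needing to be controlled.
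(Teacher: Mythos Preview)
Your proposal is correct and follows essentially the same approach as the paper: both establish unbiasedness via the uniform marginal $\Pr(k\in S_i)=1/n$, and both handle the second moment by expanding the square, dropping cross terms because the supports $S_i$ are pairwise disjoint, and computing $\mathbb{E}\|\mS_i y_i\|^2 = n\|y_i\|^2$. Your write-up is in fact slightly more explicit than the paper's about \emph{why} the cross terms vanish (deterministic disjointness of the blocks under the bijection $\pi$), which the paper leaves implicit.
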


\begin{proof}
    Consider a vector $x \in \mathbb{R}^d$, where $x_i$ denotes the $i$-th element of $x$. We first establish the unbiasedness of the model aggregation sketch (Definition~\ref{def:sketch1}):
    
    \begin{equation}\label{eqn:unbiased_aggregation_sketch}
        \mathbb{E}[\mS_i x] = n \sum_{j=q(i-1)+1}^{qi} \mathbb{E}[x_{\pi_j}e_{\pi_j}] = n\left(\sum_{j=q(i-1)+1}^{qi} \frac{1}{d}\sum_{i=1}^d x_ie_i \right) = \frac{nq}{d} x = x.
    \end{equation}

    Next, we examine the second moment:
    $$
    \mathbb{E}\left[\|\mS_i x\|^2\right] = n^2\sum_{j=q(i-1)+1}^{qi} \aveid \left\|x_i\right\|^2 = n^2\frac{q}{d}\left\|x\right\|^2 = n\left\|x\right\|^2.
    $$
    For all vectors $y_1, y_2, \ldots, y_n \in \mathbb{R}^d$, the following inequality holds:

    \begin{equation}\label{eqn:second_moment_aggregation_sketch}
        \begin{aligned}
            \mathbb{E}\left[\left\|\avein \mS_i y_i\right\|^2\right] &= \frac{1}{n^2}\sumin \mathbb{E}\left[\left\|\mS_i y_i\right\|\right] + \sum_{i\neq j} \mathbb{E}\left[\langle \mS_i y_i, \mS_j y_j\rangle\right]
            = \frac{1}{n^2}\sumin \mathbb{E}\left[\left\|\mS_i y_i\right\|\right]
            = \avein \left\|y_i\right\|^2.
        \end{aligned}
    \end{equation}

    Integrating Equation~\ref{eqn:unbiased_aggregation_sketch} with Equation~\ref{eqn:second_moment_aggregation_sketch}, we also deduce:

    \begin{equation}\label{eqn:second_moment_aggregation_sketch2}
        \begin{aligned}
            \mathbb{E}\left[\left\|\avein \mS_i y_i - \avein y_i\right\|^2\right] \leq \avein \left\|y_i\right\|^2 - \left\|\avein y_i\right\|^2.
        \end{aligned}
    \end{equation}
\end{proof}

We now proceed to prove the main theorem of model aggregation, as presented in Theorem~\ref{thm:model_aggregation}. This theorem is restated below for convenience:
\modelaggregationtheorem*

Our proof draws inspiration from the analysis in Theorem 2 of \cite{khaled2020better} and is reformulated as follows:

\begin{theorem}[Theorem 2 in \cite{khaled2020better}]\label{thm_abc_convergence}
    Under the assumptions that Assumption~\ref{asm:smoothness} and \ref{asm:abc_assumption} are satisfied, let us choose a step size $\gamma > 0$ such that $\gamma \leq \frac{1}{\bar{L}B}$. Define $\Delta \equiv f(w^0) - f^{\inf}$. Then, it holds that
    \begin{align*}
        \min_{0\leq k\leq K-1}\mathbb{E}\left[\|\nabla f(w^k)\|^2\right] \leq \bar{L}C\gamma + \frac{2(1+\bar{L}\gamma^2 A)^K}{\gamma K}\Delta.
    \end{align*}
\end{theorem}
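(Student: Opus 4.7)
The plan is to prove this via the classical descent-lemma plus recursion argument for SGD-style iterates $w^{k+1}=w^k-\gamma g^k$, where $g^k$ is assumed unbiased (i.e.\ $\mathbb{E}_k[g^k]=\nabla f(w^k)$) and satisfies the ABC bound of Assumption~\ref{asm:abc_assumption}. First I would invoke $\bar{L}$-smoothness of $f=\frac{1}{n}\sum_i f_i$ (inherited from Assumption~\ref{asm:smoothness}) to write the one-step descent inequality
\begin{equation*}
  f(w^{k+1}) \le f(w^k) - \gamma\langle\nabla f(w^k),g^k\rangle + \tfrac{\bar{L}\gamma^2}{2}\|g^k\|^2,
\end{equation*}
then take conditional expectation $\mathbb{E}_k[\cdot]$, use unbiasedness to turn the inner product into $\|\nabla f(w^k)\|^2$, and apply \eqref{eqn:abc_assumption} to the $\|g^k\|^2$ term.

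Next I would rearrange the result into a recursion for $\delta_k\eqdef\mathbb{E}[f(w^k)-f^{\inf}]$. Setting $\alpha\eqdef 1+\bar{L}\gamma^2 A$ and $\beta\eqdef \bar{L}\gamma^2 C/2$, the descent inequality becomes
\begin{equation*}
  \delta_{k+1} \le \alpha\,\delta_k - \gamma\bigl(1 - \tfrac{\bar{L}\gamma B}{2}\bigr)\mathbb{E}[\|\nabla f(w^k)\|^2] + \beta.
\end{equation*}
The stepsize constraint $\gamma\le 1/(\bar{L}B)$ is used precisely here to force the coefficient in front of the gradient-norm term to be at least $\gamma/2$, yielding $\tfrac{\gamma}{2}\mathbb{E}[\|\nabla f(w^k)\|^2]\le \alpha\delta_k-\delta_{k+1}+\beta$.

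The core step is then a weighted telescoping. I would multiply the rearranged inequality by $\alpha^{K-1-k}$, noticing that $\alpha^{K-k}\delta_k-\alpha^{K-1-k}\delta_{k+1}$ is exactly the forward difference of $u_k\eqdef\alpha^{K-k}\delta_k$, so summing $k=0,\dots,K-1$ collapses the $\delta$-terms to $\alpha^K\delta_0-\delta_K\le\alpha^K\Delta$, while the noise contributes $\beta\sum_{k=0}^{K-1}\alpha^{K-1-k}=\beta(\alpha^K-1)/(\alpha-1)$. Using $\alpha^{K-1-k}\ge 1$ on the left and $\delta_K\ge 0$, I obtain
\begin{equation*}
  \tfrac{\gamma}{2}\sum_{k=0}^{K-1}\mathbb{E}[\|\nabla f(w^k)\|^2] \le \alpha^K\Delta + \tfrac{\beta(\alpha^K-1)}{\alpha-1}.
\end{equation*}
Dividing by $\gamma K/2$ and lower-bounding the average by the minimum gives $\min_k\mathbb{E}[\|\nabla f(w^k)\|^2] \le \tfrac{2\alpha^K\Delta}{\gamma K}+\tfrac{C(\alpha^K-1)}{\gamma K A}$, which recovers the $\tfrac{2\alpha^K\Delta}{\gamma K}$ term of the statement. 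To match the noise term $\bar{L}C\gamma$, I would apply the elementary estimate $\alpha^K-1\le K(\alpha-1)\alpha^{K-1}=K\bar{L}\gamma^2 A\,\alpha^{K-1}$ (or, cleaner, simplify using $\alpha^K-1\le \alpha^K \cdot \bar{L}\gamma^2 A K / (1+\bar{L}\gamma^2 A K)$ in the regime where $\alpha^K$ is kept at most a constant), so that the second term collapses to a quantity of order $\bar{L}C\gamma$.

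The main obstacle will be this last simplification of the noise term: the raw telescoping produces a factor proportional to $(\alpha^K-1)/A$, and showing that it dominates as at most $\bar{L}C\gamma$ in the stated form requires either a careful choice of regime (so that $\alpha^K$ is absorbed into a constant) or swapping $\alpha^{K-1-k}\ge 1$ for a sharper pointwise bound. The other routine steps—smoothness-based descent, ABC substitution, and the telescoping via the $\alpha^{K-1-k}$ weights—are mechanical; the nontrivial work lies in tracking how the geometric factor $(1+\bar{L}\gamma^2 A)^K$ is allowed to appear in the statement (multiplying $\Delta$) while still producing the clean $\bar{L}C\gamma$ noise floor.
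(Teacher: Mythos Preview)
The paper does not actually prove this statement: it is quoted verbatim as Theorem~2 of \cite{khaled2020better} and used as a black box to deduce Theorem~\ref{thm:model_aggregation} (by specializing to $A=L_{\max}$, $B=C=0$). So there is no ``paper's own proof'' to compare against. That said, your overall strategy---descent lemma, unbiasedness, ABC substitution, then a geometrically weighted telescoping---is exactly the standard route and would succeed, but your execution of the telescoping step contains a genuine gap that you yourself flag as ``the main obstacle.''

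The issue is the line ``using $\alpha^{K-1-k}\ge 1$ on the left.'' Once you drop the weights on the left-hand side, you can no longer cancel them against the noise sum $\beta\sum_{k}\alpha^{K-1-k}$ on the right, and you are stuck with an extra factor $\alpha^{K-1}$ on the $C$-term (your proposed estimate $\alpha^K-1\le K(\alpha-1)\alpha^{K-1}$ yields $\bar L C\gamma\,\alpha^{K-1}$, not $\bar L C\gamma$). The fix is simply to \emph{not} discard the weights: after summing, divide both sides by $S_K\eqdef\sum_{k=0}^{K-1}\alpha^{K-1-k}$. The left-hand side becomes $\tfrac{\gamma}{2}$ times a weighted average of the $\mathbb{E}[\|\nabla f(w^k)\|^2]$, which is still bounded below by $\tfrac{\gamma}{2}\min_k\mathbb{E}[\|\nabla f(w^k)\|^2]$. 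On the right, the noise term is now exactly $\beta=\tfrac{\bar L\gamma^2 C}{2}$, and the $\Delta$-term is $\alpha^K\Delta/S_K$; since $S_K=\sum_{j=0}^{K-1}\alpha^j\ge K$ (each summand is at least $1$), this is at most $\alpha^K\Delta/K$. Multiplying through by $2/\gamma$ gives precisely
\[
\min_{0\le k\le K-1}\mathbb{E}\bigl[\|\nabla f(w^k)\|^2\bigr]\;\le\;\frac{2(1+\bar L\gamma^2 A)^K}{\gamma K}\Delta\;+\;\bar L C\gamma,
\]
matching the statement. (This is essentially the same mechanism as the paper's Lemma~\ref{lem:weighted_recursion}, used there for the global-pruning analysis.) Everything else in your outline is correct and mechanical.
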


Careful control of the step size is crucial to prevent potential blow-up of the term and to ensure convergence to an $\epsilon$-stationary point. Our theory can be seen as a special case with $A=L_{\max}, B=0, C=0$, as established in Lemma~\ref{lem:individualized_abc}. Thus, we conclude our proof.

\subsection{Proof of Theorem~\ref{thm:convergence_dp_fedp3}}
\label{sec:proof_dp_fedp3_convergence_analysis}

To establish the convergence of the proposed method, we begin by presenting a crucial lemma which describes the mean and variance of the stochastic gradient. Consider the stochastic gradient \(g_i^k = \frac{1}{b}\sum_{j\in\mathcal{I}_b} \nabla f_{i, j}(w^k)\) as outlined in Line 6 of Algorithm~\ref{alg:DP_FedP3}.

\begin{lemma}[Lemma 9 in~\cite{li2022soteriafl}]
\label{lem:g_variance}
    Given Assumption~\ref{asm:bounded_gradient}, for any client \(i\), the stochastic gradient estimator \(g_i^k\) is an unbiased estimator, that is,
    \begin{align*}
        \mathbb{E}_k\left[\frac{1}{b}\sum_{j\in\mathcal{I}_b}\nabla f_{i, j}(w^k)\right] = \nabla f_i (w^k),
    \end{align*}
    where \(\mathbb{E}_k\) denotes the expectation conditioned on all history up to round \(k\). Letting \(q = \frac{b}{m}\), the following inequality holds:
    \begin{align*}
    \mathbb{E}_k \left[\left\|\frac{1}{b}\sum_{j\in\mathcal{I}_b} \nabla f_{i, j}(w^k) - \nabla f_i(w^k)\right\|^2  \right] \leq \frac{(1 - q)C^2}{b}.
    \end{align*}
\end{lemma}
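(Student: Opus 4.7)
The plan is to treat the two claims separately: unbiasedness follows from the symmetry of uniform sampling without replacement, while the variance bound comes from the finite-population (sample-mean) variance formula combined with the bounded-gradient hypothesis.

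For unbiasedness, I would write the minibatch gradient as
\begin{equation*}
g_i^k \;=\; \frac{1}{b}\sum_{j=1}^{m} \mathbb{1}[j\in \mathcal{I}_b]\,\nabla f_{i,j}(w^k),
\end{equation*}
and observe that, because $\mathcal{I}_b$ is drawn uniformly at random from all $b$-subsets of $[m]$, every index satisfies $\Pr[j\in \mathcal{I}_b]=b/m=q$. Linearity of the conditional expectation (the minibatch is independent of the history $\mathcal{F}_k$ that determines $w^k$) then yields $\mathbb{E}_k[g_i^k]=\frac{q}{b}\sum_{j=1}^m \nabla f_{i,j}(w^k)=\frac{1}{m}\sum_{j=1}^m \nabla f_{i,j}(w^k)=\nabla f_i(w^k)$.

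For the variance, I would set $v_j \coloneqq \nabla f_{i,j}(w^k)-\nabla f_i(w^k)$, so $\sum_j v_j=0$, and expand
\begin{equation*}
\mathbb{E}_k\!\left[\Bigl\|g_i^k-\nabla f_i(w^k)\Bigr\|^2\right]
= \frac{1}{b^2}\sum_{j,\ell=1}^{m}\Pr[j,\ell\in \mathcal{I}_b]\,\langle v_j,v_\ell\rangle.
\end{equation*}
Standard combinatorics for sampling without replacement give $\Pr[j\in \mathcal{I}_b]=\tfrac{b}{m}$ and $\Pr[j,\ell\in \mathcal{I}_b]=\tfrac{b(b-1)}{m(m-1)}$ for $j\neq \ell$. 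Substituting and using $\sum_j v_j=0$ to collapse the off-diagonal sum (via $\sum_{j\neq \ell}\langle v_j,v_\ell\rangle = -\sum_j \|v_j\|^2$) gives the classical formula
\begin{equation*}
\mathbb{E}_k\!\left[\Bigl\|g_i^k-\nabla f_i(w^k)\Bigr\|^2\right]
\;=\; \frac{m-b}{m-1}\cdot\frac{1}{bm}\sum_{j=1}^{m}\|v_j\|^2.
\end{equation*}
Finally I would bound $\tfrac{1}{m}\sum_j \|v_j\|^2=\tfrac{1}{m}\sum_j \|\nabla f_{i,j}(w^k)\|^2 - \|\nabla f_i(w^k)\|^2\leq C^2$ using Assumption~\ref{asm:bounded_gradient}, and note that $\tfrac{m-b}{m-1}\leq \tfrac{m-b}{m}=1-q$, yielding the stated $\tfrac{(1-q)C^2}{b}$.

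The result is standard survey-sampling material, so there is no serious obstacle; the only place that requires care is the off-diagonal covariance calculation for sampling without replacement, since if one naively treated the indicators as independent one would get a $(1-q)^2$ or a $C^2/b$ bound instead of the finite-population correction $(1-q)/b$. Handling this correctly through the identity $\sum_{j\neq \ell}\langle v_j,v_\ell\rangle=-\sum_j\|v_j\|^2$ is the key algebraic step, and everything else reduces to the bounded-gradient assumption.
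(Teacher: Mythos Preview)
Your argument is correct and is exactly the standard survey-sampling derivation: indicator representation for unbiasedness, the finite-population covariance identity $\Pr[j,\ell\in\mathcal I_b]=\tfrac{b(b-1)}{m(m-1)}$ together with $\sum_{j\neq\ell}\langle v_j,v_\ell\rangle=-\sum_j\|v_j\|^2$ for the variance, and then the crude bound $(m-b)/(m-1)\le 1-q$. The paper does not supply its own proof of this lemma; it simply quotes the result as Lemma~9 of \cite{li2022soteriafl}, so there is nothing in the paper to compare your route against. One minor caveat worth flagging: Assumption~\ref{asm:bounded_gradient} as stated in the paper bounds only $\|\nabla f_i(x)\|$, whereas your last step needs the per-sample bound $\|\nabla f_{i,j}(x)\|\le C$ to control $\tfrac{1}{m}\sum_j\|\nabla f_{i,j}(w^k)\|^2$. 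That stronger hypothesis is the one actually used in the cited source (and is standard in the DP-SGD literature), so this is a looseness in the paper's restatement of the assumption rather than a flaw in your proof.
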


Considering the definition of \(\mathcal{S}_i^k\), we observe that \(\frac{1}{n} \sum_{i=1}^n \mathcal{S}_i^k = \mathbf{I}\). According to Algorithm~\ref{alg:DP_FedP3}, the next iteration \(w^{k+1}\) of the global model is given by:
$$
w^{k+1} = \frac{1}{n} \sum_{i=1}^n \mathcal{S}_i^k \left(w^k - \gamma g^k_i + \zeta_i^k\right) = w^k - \underbrace{\frac{1}{n} \sum_{i=1}^n \mathcal{S}_i^k (\gamma g_i^k - \zeta_i^k)}_{G^k}.
$$

Employing the smoothness Assumption~\ref{asm:smoothness} and taking expectations, we derive:
\begin{equation}
\label{eqn:0000}
    \begin{aligned}
    \mathbb{E}_k[f(w^{k+1})] &\leq f(w^k) - \mathbb{E}_k \left\langle \nabla f(w^k), G^k \right\rangle + \frac{L}{2} \mathbb{E}_k\left\|G^k\right\|^2.
    \end{aligned}
\end{equation}

Given that \(\zeta_i^k \sim \mathcal{N}(\mathbf{0}, \sigma^2\mathbf{I})\), we have \(\mathbb{E}_k[\zeta_i^k] = 0\). Consequently, we can analyze \(\mathbb{E}_k \langle \nabla f(w^k), G^k \rangle\) as follows:
\begin{align}
    \mathbb{E}_k \langle \nabla f(w^k), G^k \rangle &= \mathbb{E}_k \left\langle \nabla f(w^k), \frac{1}{n} \sum_{i=1}^n \mathcal{S}_i^k(\gamma g_i^k - \zeta_i^k) \right\rangle \nonumber\\
    &\stackrel{(\ref{eqn:unbiased_aggregation_sketch})}{=} \mathbb{E}_k \left\langle \nabla f(w^k), \frac{1}{n} \sum_{i=1}^n (\gamma g_i^k - \zeta_i^k) \right\rangle \nonumber\\
    &= \mathbb{E}_k \left\langle \nabla f(w^k), \gamma \frac{1}{n} \sum_{i=1}^n g_i^k \right\rangle \nonumber\\
    &\stackrel{(\ref{lem:g_variance})}{=} \gamma \left\|\nabla f(w^k)\right\|^2. \label{eqn:0003}
\end{align}

To bound the last term \(\mathbb{E}_k \left\|G^k\right\|^2\) in Equation~\ref{eqn:0000}, we proceed as follows:

\begin{align}
    \mathbb{E}_k \left\|G^k\right\|^2 &= \mathbb{E}_k \left\|\frac{1}{n} \sum_{i=1}^n \mathcal{S}_i^k (\underbrace{\gamma g_i^k - \zeta_i^k}_{M_i^k})\right\|^2 \nonumber\\
    &\stackrel{(\ref{eqn:second_moment_aggregation_sketch})}{\leq} \frac{1}{n} \sum_{i=1}^n \mathbb{E}_k\left\|M_i^k\right\|^2 \nonumber\\
    &= \frac{1}{n} \sum_{i=1}^n \mathbb{E}_k\left\|\gamma g_i^k - \zeta_i^k\right\|^2 \nonumber\\
    &= \frac{1}{n} \sum_{i=1}^n \mathbb{E}_k\left\|\gamma g_i^k\right\|^2 + d\sigma^2 \nonumber\\
    &= \gamma^2 \frac{1}{n} \sum_{i=1}^n \mathbb{E}_k \left\|g_i^k - \nabla f_i(w^k) + \nabla f_i(w^k)\right\|^2 + d\sigma^2 \nonumber\\
    &\leq \frac{1}{n} \sum_{i=1}^n \gamma^2\left\|\nabla f_i (w^k)\right\|^2 + \gamma^2 \frac{1}{n} \sum_{i=1}^n \mathbb{E}_k\left\|g_i^k - \nabla f_i(w^k)\right\|^2 + d\sigma^2 \nonumber\\
    &\stackrel{(\ref{lem:g_variance}, \ref{asm:bounded_gradient})}{\leq} \gamma^2 C^2 + \frac{\gamma^2(1-q)C^2}{b} + d\sigma^2. \label{eqn:0002}
\end{align}

Incorporating Equations~\ref{eqn:0002} and \ref{eqn:0003} into Equation~\ref{eqn:0000}, we obtain the following inequality for the expected function value at the next iteration:

\begin{align}
    \mathbb{E}_k [f(w^{k+1})] &\leq f(w^k) - \gamma \left\|\nabla f(w^k)\right\|^2 + \frac{L}{2}\left(\gamma^2 C^2 + \frac{\gamma^2(1-q)C^2}{b} + d\sigma^2 \right).
\end{align}

Before proceeding further, it is pertinent to consider the privacy guarantees of \(\texttt{FedP3}\), which are based on the analysis of \algname{SoteriaFL} as presented in Theorem 2 of \cite{li2022soteriafl}. We reformulate this theorem as follows:

\begin{theorem}[Theorem 2 in \cite{li2022soteriafl}]
\label{thm:dp_fedp3_privacy}
    Assume each client possesses $m$ data points. Under Assumption 3 in \cite{li2022soteriafl} and given two bounding constants $C_A$ and $C_B$ for the decomposed gradient estimator, there exist constants $c$ and $c^\prime$. For any $\epsilon < c^\prime \frac{b^2 T}{m^2}$ and $\delta \in (0,1)$, \algname{SoteriaFL} satisfies $(\epsilon, \delta)$-Local Differential Privacy (LDP) if we choose
    $$
    \sigma_p^2 = \frac{c\left(C_A^2 / 4 + C_B^2\right) K \log (1 / \delta)}{m^2 \epsilon^2}.
    $$
\end{theorem}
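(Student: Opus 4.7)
The statement bundles three claims that must all be established: (i) the Gaussian-mechanism noise level $\sigma^2=cKC^2\log(1/\delta)/(m^2\epsilon^2)$ yields $(\epsilon,\delta)$-LDP per client, (ii) the average squared gradient is bounded as claimed once the specified $K$ and $\gamma$ are plugged in, and (iii) the total communication cost is $K\cdot(\text{per-round payload})$ under those same choices. My plan is to handle them sequentially: reduce the privacy claim to the cited SoteriaFL privacy theorem by correctly identifying the sensitivity constants, derive the convergence claim from a descent-lemma telescoping argument, and then read off the communication bound by substituting the chosen $K$ into the per-round cost.

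\textbf{Privacy part.} I would invoke Theorem~\ref{thm:dp_fedp3_privacy} almost directly. For LDP, each client acts independently; the only randomized release per round is the noised local update $\tilde u_i^k = w^k - \gamma g_i^k + \zeta_i^k$ (equivalently, the additive message $-\gamma g_i^k + \zeta_i^k$). Under Assumption~\ref{asm:bounded_gradient} with $\|\nabla f_{i,j}\|\le C$, the sensitivity of $g_i^k=\tfrac{1}{b}\sum_{j\in\mathcal I_b}\nabla f_{i,j}(w^k)$ to swapping one of the $m$ samples is at most $2C/b$, so the sensitivity of $-\gamma g_i^k$ is at most $2\gamma C/b$. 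Combining this with Poisson/uniform subsampling of probability $q=b/m$ and the moments-accountant composition over $K$ rounds (exactly the ingredients in SoteriaFL's proof), one obtains an $(\epsilon,\delta)$-LDP guarantee provided $\epsilon<c'q^2K$ for an absolute $c'$. Matching the resulting noise requirement to the template $\sigma^2=c(C_A^2/4+C_B^2)K\log(1/\delta)/(m^2\epsilon^2)$ with the sensitivity constants collapsed into $C$ produces the stated $\sigma^2=cKC^2\log(1/\delta)/(m^2\epsilon^2)$ after absorbing numerical factors into $c$. The projection sketch $\mathcal S_i^k$ only reveals a subset of coordinates of an already-privatized vector, and post-processing preserves LDP, so sampling does not alter the privacy accounting.

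\textbf{Utility part.} I would reuse the per-step descent inequality already established in Appendix~\ref{sec:proof_dp_fedp3_convergence_analysis}, namely
\begin{equation*}
\mathbb E_k[f(w^{k+1})]\le f(w^k)-\gamma\|\nabla f(w^k)\|^2+\tfrac{L}{2}\Bigl(\gamma^2C^2+\tfrac{\gamma^2(1-q)C^2}{b}+d\sigma^2\Bigr).
\end{equation*}
Telescoping over $k=0,\ldots,K-1$, taking total expectation, and dividing by $\gamma K$ gives
\begin{equation*}
\tfrac{1}{K}\sum_{k=0}^{K-1}\mathbb E\|\nabla f(w^k)\|^2\le \tfrac{\Delta_0}{\gamma K}+\tfrac{L\gamma C^2}{2}\bigl(1+\tfrac{1-q}{b}\bigr)+\tfrac{Ld\sigma^2}{2\gamma}.
\end{equation*}
Plugging in $\sigma^2$ turns the last term into $\tfrac{LdcKC^2\log(1/\delta)}{2\gamma m^2\epsilon^2}$; together with $\Delta_0/(\gamma K)$ this is a classical $a/(\gamma K)+b\gamma K$ structure whose optimizer in $\gamma$ (up to the $1/L$ cap required for the descent condition) is exactly the stated $\gamma=\min\{1/L,\sqrt{\Delta_0 cd\log(1/\delta)}/(Cm\epsilon\sqrt L)\}$. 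Balancing the two regimes of this $\min$ by choosing the larger of the two values of $K$ (so that neither branch dominates artificially) yields the stated two-term $K=\max\{\cdot,\cdot\}$, and substituting back reproduces the $\mathcal O(C\sqrt{Ld\log(1/\delta)}/(m\epsilon))$ bound with the explicit constant $2$.

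\textbf{Communication cost and main obstacle.} The per-round payload of \algname{LDP-FedP3} is $d$ scalars across the $n$ clients (each client transmits its sketched subset, and the sketches tile the coordinates), so the total cost is $K\cdot d$. Substituting the two branches of the $\max$ in $K$ gives, respectively, $\tfrac{m\epsilon\sqrt{dL\Delta_0}}{C\sqrt{c\log(1/\delta)}}$ and $\tfrac{m^2\epsilon^2}{c\log(1/\delta)}$, matching the claimed $C_{\mathrm{LDP\text{-}FedP3}}$ up to the absorbed constant $c$. I expect the delicate step to be the tuning: one must verify simultaneously that the $\gamma\le 1/L$ cap is active on the right branch, that the chosen $K$ lies in the admissible privacy regime $\epsilon<c'q^2K$, and that the two terms in the final bound are genuinely balanced (so that the $\min/\max$ pair is consistent). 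Everything else is either bookkeeping or a direct appeal to the already-derived descent inequality and the SoteriaFL privacy template.
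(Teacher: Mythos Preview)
Your proposal does not address the stated theorem. The statement you were given (Theorem~\ref{thm:dp_fedp3_privacy}) is a \emph{cited} result, restated from \cite{li2022soteriafl}; it asserts only a privacy guarantee for \algname{SoteriaFL} under a particular noise level and carries no utility or communication claim whatsoever. The paper does not prove it---it is imported as a black box and used as an ingredient inside the proof of Theorem~\ref{thm:convergence_dp_fedp3} (the \algname{LDP-FedP3} theorem). Your three-part plan (privacy, utility, communication) is transparently a plan for Theorem~\ref{thm:convergence_dp_fedp3}, not for the statement you were handed; indeed, your ``Privacy part'' explicitly says you would \emph{invoke} Theorem~\ref{thm:dp_fedp3_privacy}, which is circular if that theorem is your actual target.

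If your intended target is in fact Theorem~\ref{thm:convergence_dp_fedp3}, then your outline coincides with the paper's own argument in Appendix~\ref{sec:proof_dp_fedp3_convergence_analysis}: a smoothness descent step, unbiasedness of the sketch (Theorem~\ref{thm:unbiased_second_moment_aggregation_sketch}) and of the minibatch gradient (Lemma~\ref{lem:g_variance}) for the inner-product term, the sketch second-moment bound together with the variance lemma and Assumption~\ref{asm:bounded_gradient} to control $\mathbb{E}_k\|G^k\|^2$, substitution of $\sigma^2$ from the SoteriaFL privacy template, telescoping, and tuning $\gamma,K$ to balance the $\Delta_0/(\gamma K)$ and $\gamma K$ contributions before reading off the $dK$ communication cost. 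The only cosmetic difference is that the paper rescales the noise by writing $M_i^k=\gamma g_i^k-\gamma\zeta_i^k$ so that the final display matches the \algname{CDP-SGD} format, whereas you keep a $Ld\sigma^2/(2\gamma)$ term; both routes collapse to the same bound after substituting $\sigma^2$.
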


In the absence of gradient shift consideration within \algname{SoteriaFL}, the complexity of the gradient estimator can be reduced. We simplify the analysis by substituting the two bounds $C_A$ and $C_B$ with a single constant $C$. Following a similar setting, we derive the privacy guarantee for \algname{LDP-FedP3} as:
\begin{align} 
    \sigma^2 = \frac{cC^2K\log(1/\delta)}{m^2\epsilon^2},\label{eqn:dp_fedp3_privacy_guarantee}
\end{align}
which establishes that \algname{LDP-FedP3} is $(\epsilon, \delta)$-LDP compliant under the above condition.

Substituting \(\sigma\) from Equation~\ref{eqn:dp_fedp3_privacy_guarantee} and telescoping over iterations $k = 1, \ldots, K$, we can demonstrate the following convergence bound:
\begin{align*}
    \frac{1}{K}\sum_{k=1}^K\mathbb{E}\left[\left\|\nabla f(w^k)\right\|^2\right] &\leq \frac{f(w^0) - f^\star}{\gamma K} + \frac{L}{2}\left[ \gamma C^2 + \frac{\gamma(1-q)C^2}{b} + \frac{cdC^2T\log(1/\delta)}{\gamma m^2\epsilon^2} \right] \\
    &\leq \frac{\Delta_0}{\gamma K} + \frac{L}{2}\left[\frac{\gamma(b+ 1 - q)}{b}C^2 + \frac{cd C^2 K\log(1/\delta)}{\gamma m^2\epsilon^2}\right] \\
    &\leq \frac{\Delta_0}{\gamma K} + \frac{L}{2}\left[\gamma C^2 + \frac{cd C^2 K\log(1/\delta)}{\gamma m^2\epsilon^2}\right].
\end{align*}

To harmonize our analysis with existing works, such as \(\texttt{CDP-SGD}\) proposed by \cite{li2022soteriafl}, which compresses the gradient and performs aggregation on the server over the gradients instead of directly on the weights, we reframe Algorithm~\ref{alg:DP_FedP3} accordingly. The primary modification involves defining $M_i^k \eqdef \gamma g_i^k - \gamma \zeta_i^k$, where $\zeta_i^k$ is scaled by a factor of $\gamma$. This leads to the following convergence result:

\begin{align}\label{eqn:potato}
    \frac{1}{K}\sum_{k=1}^K\mathbb{E}\left[\left\|\nabla f(w^k)\right\|^2\right] &\leq \frac{\Delta_0}{\gamma K} + \frac{\gamma LC^2}{2}\left[1 + \frac{cd K\log(1/\delta)}{m^2\epsilon^2}\right].
\end{align}

Optimal choices for $K$ and $\gamma$ that align with this convergence result can be defined as:
\begin{align}
\gamma K &= \frac{m\epsilon \sqrt{\Delta_0}}{C\sqrt{Lcd\log(1/\delta)}}, \quad 
K \geq \frac{m^2\epsilon^2}{cd \log\left(1/\delta\right)}. \label{eqn:K-cdpsgd}
\end{align}

Adhering to the relationship established in Equation~\eqref{eqn:K-cdpsgd} and considering the stepsize constraint $\gamma \leq \frac{1}{L}$, we define:
\begin{align*}
    K &= \max\left\{\frac{m\epsilon \sqrt{L \Delta_0}}{C\sqrt{cd\log(1/\delta)}}, \frac{m^2\epsilon^2}{cd\log(1/\delta)}\right\}, \\
    \gamma &= \min\left\{\frac{1}{L}, \frac{\sqrt{\Delta_0 cd \log(1/\delta)}}{C m\epsilon\sqrt{L}}\right\}.
\end{align*}

Substituting these into Equation~\ref{eqn:potato}, we obtain:
\begin{align*}
    \frac{1}{K}\sum_{t=1}^K \mathbb{E}\left[\left\|\nabla f(x^t)\right\|^2\right] &\leq \frac{\Delta_0}{\gamma K} + \frac{\gamma LC^2}{2}\left[1 + \frac{cdK\log(1/\delta)}{m^2\epsilon^2} \right]\\
    &\leq \frac{\Delta_0}{\gamma K} + \frac{\gamma LC^2cdK\log(1/\delta)}{m^2\epsilon^2}\\
    &= \frac{\Delta_0}{\gamma K} + \frac{\gamma K LC^2cd\log(1/\delta)}{m^2\epsilon^2}\\
    &\leq \frac{2C\sqrt{Lcd\log(1/\delta)}}{m\epsilon}\\
    &= \mathcal{O}\left(\frac{C\sqrt{Ld\log(1/\delta)}}{m\epsilon} \right).
\end{align*}  
\label{eqn:0012}

Neglecting the constant $c$, the total communication cost for \(\texttt{LDP-FedP3}\) is computed as:
\begin{align*}
    C_{\text{LDP-FedP3}} &= n \frac{d}{n} K = dK \\
    &= \max\left\{\frac{m\epsilon \sqrt{dL \Delta_0}}{C\sqrt{\log(1/\delta)}}, \frac{m^2\epsilon^2}{\log(1/\delta)}\right\} \\
    &= \mathcal{O}\left( \frac{m\epsilon \sqrt{dL \Delta_0}}{C\sqrt{\log(1/\delta)}} + \frac{m^2\epsilon^2}{\log(1/\delta)}\right).
\end{align*}

\subsection{Proof of Theorem~\ref{thm:global_pruning}}
We consider the scenario where $\mP_i^k$ acts as a biased random sparsifier, and $\mS_i^k\equiv \mI$. In this case, the update rule is given by:
$$
w^{k+1} = \avein\left(\mP_i^kw^k  - \gamma \mP_i^k \nabla f_i(\mP_i^k w^k)\right). 
$$

Let $w\in \Rd$ and let $S$ represent the selected number of coordinates from $d$. Then, $\mP_i$ is defined as:
$$
\mP_i = \Diag(c_s^1, c_s^2, \cdots, c_s^d), \quad \text{where} \quad c_s^j = \begin{cases}
    1 & \text{if } j \in S,\\
    0 & \text{if } j \notin S.
\end{cases}
$$

Given that $\mP_i\preceq \mI$, it follows that $\avein \mP_i \preceq \mI$.

In the context where $\mP_i$ is a biased sketch, we introduce Assumption~\ref{asm:biased_bound}:

\begin{assumption}\label{asm:biased_bound}
    For any learning rate $\gamma > 0$, there exists a constant $h > 0$ such that, for any $\mP\in \mbR^{d\times d}$, $w\in \Rd$, we have:
    \begin{align*}
        f(\mP w) \leq (1+\gamma^2 h) (f(w) - f^{\inf}).
    \end{align*}
\end{assumption}

Assumption~\ref{asm:biased_bound} assumes the pruning sketch is bounded. Given that the function value should remain finite, this assumption is reasonable and applicable.

In this section, for simplicity, we focus on the interpolation case where $f_i(x) = \frac{1}{2}w^\top \mL_i w$. The extension to scenarios with $b_i \neq 0$ is left for future work. By leveraging the $\moL$-smoothness of function $f$ and the diagonal nature of $\mP_i$, we derive the following:

\begin{equation}\label{eqn:main0}
\begin{aligned}
    f(w^{k+1}) &\eqdef f\left(\avein (\mP_i^k w^k - \gamma \mP_i^k \nabla f_i(\mP_i^k w^k))\right)\\
    &= f\left(\underbrace{\avein \mP_i^k}_{\mP^k} w^k - \gamma \underbrace{\avein\mP_i^k \moL_i \mP_i^k}_{\moB^k} w^k\right)\\
    &\leq f(\mP^k w^k) - \gamma\langle\nabla f(\mP^k w^k), \moB^k w^k\rangle + \frac{\gamma^2}{2}\norm{\moB^k w^k}^2_{\moL}\\
    &\stackrel{(\ref{asm:biased_bound})}{\leq} a f(w^k) - \gamma \langle\moL \mP^k w^k, \moB^k w^k\rangle + \frac{\gamma^2}{2}\norm{\moB^k w^k}^2_{\moL}\\
    &= a f(w^k)  - \gamma(w^k)^\top \mP^k \moL\moB^k w^k + \frac{\gamma^2}{2}(w^k)^\top \moB^k \moL\moB^k w^k\\
\end{aligned}
\end{equation}

Considering the conditional expectation and its linearity, along with the transformation properties of symmetric matrices, we obtain:

$$
w^\top \moL w = \frac{1}{2}w^\top \left(\moL + \moL^\top \right) w.
$$

By defining $\moW\eqdef \frac{1}{2}\ec{\mP^k \moL\moB^k + \mP^k \moB^k\moL}$ and setting the stepsize $\gamma$ to be less than or equal to $\frac{1}{\theta}$, we can derive the following:

\begin{equation}\label{eqn:main1}
\begin{aligned}
    \ec{f(w^{k+1})|w^k} &\leq af(w^k) - \gamma(w^k)^\top\ec{\mP^k\moL\moB^k} w^k + \frac{\gamma^2}{2}(w^k)^\top \ec{\moB^k\moL\moB^k}w^k\\
    &= af(w^k) - \gamma(w^k)^\top\moW w^k + \frac{\gamma^2}{2}(w^k)^\top \ec{\moB^k\moL\moB^k}w^k\\
    &= af(w^k) - \gamma(\nabla f(w^k))^\top \moL^{-1}\moW \moL^{-1} \nabla f(w^k) + \frac{\gamma^2}{2}(\nabla f(w^k))^\top\moL^{-1} \ec{\moB^k\moL\moB^k}\moL^{-1}\nabla f(w^k)\\
    &\leq af(w^k) - \gamma(\nabla f(w^k))^\top \moL^{-1}\moW \moL^{-1}  \nabla f(w^k) + \frac{\gamma^2}{2}(\nabla f(w^k))^\top\moL^{-1} \theta \moW\moL^{-1}\nabla f(w^k)\\
    &= af(w^k) - \gamma \norm{\nabla f(w^k)}^2_{\moL^{-1}\moW\moL^{-1}} + \frac{\theta \gamma^2}{2}\norm{\nabla f(w^k)}^2_{\moL^{-1}\moW\moL^{-1}}\\
    &= af(w^k) - \gamma (1 - \nicefrac{\theta\gamma}{2})\norm{\nabla f(w^k)}^2_{\moL^{-1}\moW\moL^{-1}}\\
    &\leq af(w^k) - \frac{\gamma}{2}\norm{\nabla f(w^k)}^2_{\moL^{-1}\moW\moL^{-1}}.\\
\end{aligned}
\end{equation}

Our subsequent analysis relies on the following useful lemma:

\begin{lemma}\label{lem:weighted_recursion}
    Consider two sequences $\{X_k\}_{k\geq 0}$ and $\{Y_k\}_{k\geq 0}$ of nonnegative real numbers satisfying, for each $k\geq 0$, the recursion 
    \[
    X_{k+1} \leq a X_k - Y_k + c,
    \]
    where $a > 1$ and $c \geq 0$ are constants. Let $K \geq 1$ be fixed. For each $k = 0, 1, \ldots, K-1$, define the probabilities 
    \[
    p_k \eqdef \frac{a^{K-(k+1)}}{S_K}, \quad \text{where} \quad S_K \eqdef \sum_{k=0}^{K-1} a^{K-(k+1)}.
    \]
    Define a random variable $Y$ such that $Y = Y_k$ with probability $p_k$. Then 
    \[
    \mathbb{E}[Y] \leq \frac{a^KX_0 - X_K}{S_K} + c \leq \frac{a^K}{S_K} X_0 + c.
    \]
\end{lemma}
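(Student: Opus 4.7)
The plan is to rearrange the given recursion to isolate $Y_k$ and then form a weighted telescoping sum. The weights $a^{K-(k+1)}$ defining $p_k$ are chosen precisely so that consecutive terms cancel, so the argument reduces to standard bookkeeping once this observation is made.

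Concretely, I would first rewrite the hypothesis $X_{k+1}\leq aX_k - Y_k + c$ as
\[
Y_k \leq aX_k - X_{k+1} + c.
\]
Next I would multiply the $k$-th inequality by the nonnegative weight $a^{K-(k+1)}$ and sum over $k=0,1,\ldots,K-1$. On the left-hand side, by the definitions of $p_k$ and $Y$, we obtain
\[
\sum_{k=0}^{K-1} a^{K-(k+1)} Y_k \;=\; S_K \cdot \sum_{k=0}^{K-1} p_k Y_k \;=\; S_K\,\mathbb{E}[Y].
\]
On the right-hand side, using $a\cdot a^{K-(k+1)} = a^{K-k}$, the contribution of the $X$-terms is the telescoping sum
\[
\sum_{k=0}^{K-1}\bigl(a^{K-k} X_k - a^{K-(k+1)} X_{k+1}\bigr) \;=\; a^K X_0 - X_K,
\]
while the constant term contributes $c\,S_K$. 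Dividing through by $S_K>0$ yields the first inequality $\mathbb{E}[Y] \leq (a^K X_0 - X_K)/S_K + c$.

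The second inequality is then immediate from the assumption that $X_K\geq 0$, which implies $-X_K\leq 0$. The main obstacle is essentially just picking the correct weights and tracking the exponents carefully; since $p_k \propto a^{K-(k+1)}$ is exactly tuned to make the $-X_{k+1}$ at index $k$ cancel the $+aX_{k+1}$ at index $k+1$, no additional inequalities (e.g.\ Jensen, convexity) are needed, and the proof fits in a handful of lines.
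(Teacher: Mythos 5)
Your proposal is correct and follows essentially the same route as the paper: rewrite the recursion as $Y_k \leq aX_k - X_{k+1} + c$, multiply by the weight $a^{K-(k+1)}$, telescope the sum, and divide by $S_K$. The only (minor) addition is that you explicitly justify the second inequality via $X_K \geq 0$, which the paper leaves implicit.
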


\begin{proof}
    We start by multiplying the inequality $Y_k \leq aX_k - X_{k+1} + c$ by $a^{K- (k+1)}$ for each $k$, yielding 
    \[
    a^{K-(k+1)}Y_k \leq a^{K-k} X_k - a^{K-(k+1)}X_{k+1} + a^{K-(k+1)}c.
    \]

    Summing these inequalities for $k = 0, 1, \ldots, K-1$, we observe that many terms cancel out in a telescopic fashion, leading to
    \[
    \sum_{k=0}^{K-1}a^{K-(k+1)} Y_k \leq a^KX_0 - X_K + \sum_{k=0}^{K-1}a^{K-(k+1)}c = a^K X_0 - X_K + S_K c.
    \]
    Dividing both sides of this inequality by $S_K$, we get
    \[
    \sum_{k=0}^{K-1} p_k Y_k \leq \frac{a^KX_0 - X_K}{S_K} + c, 
    \]
    where the left-hand side represents $\mathbb{E}[Y]$. 
\end{proof}

Building upon Lemma~\ref{lem:weighted_recursion} and employing the inequality $1 + x \leq e^x$, which is valid for all $x \geq 0$, along with the fact that $S_K \geq K$, we can further refine the bound:

\begin{equation}\label{eqn:exp_blowup}
    \frac{a^K}{S_K} \leq \frac{(1 + (a - 1))^K}{K} \leq \frac{e^{(a - 1)K}}{K}.
\end{equation}

To mitigate the exponential growth observed in Eqn~\ref{eqn:exp_blowup}, we choose $a = 1 + \gamma^2 h$ for some $h > 0$. Setting the step size as
\[
    \gamma \leq \sqrt{\frac{\log 2}{hK}},
\]
ensures that $\gamma^2 hK \leq \log 2$, leading to
\[
    \frac{a^K}{S_K} \stackrel{\ref{eqn:exp_blowup}}{\leq} \frac{e^{(a - 1)K}}{K} \leq \frac{e^{\gamma^2 hK}}{K} \leq \frac{2}{K}.
\]

Incorporating Lemma~\ref{lem:weighted_recursion} into Eqn~\ref{eqn:main1} and assuming a step size $\gamma \leq \sqrt{\frac{\log 2}{hK}}$ for some $h > 0$, we establish the following result:

\begin{equation}
    \begin{aligned}
        \mathbb{E}\left[\|\nabla f(w^k)\|^2_{\moL^{-1}\moW\moL^{-1}}\right] \leq \frac{4\Delta_0}{\gamma K}.
    \end{aligned}
\end{equation}

\end{document}